%% file: calibsft_arxiv.tex
\documentclass{article}
\usepackage{iclr2027_conference,times}

\input{math_commands.tex}

\usepackage{url}
\usepackage{microtype}
\usepackage{graphicx}
\usepackage{booktabs}
\usepackage{pdfpages}
\usepackage{caption}
\usepackage{tabularx}
\usepackage{subcaption}
\usepackage{booktabs}
\usepackage{longtable}
\usepackage{pdflscape}
\usepackage{algorithm}
\usepackage{algorithmicx}
\usepackage{algpseudocode}
\usepackage{amsmath}
\usepackage{amssymb}
\usepackage{mathtools}
\usepackage{amsthm}
\usepackage{color}
\usepackage{amsfonts}
\usepackage{booktabs}
\usepackage{xtab}
\usepackage{afterpage}
\usepackage{booktabs}
\usepackage{multirow}
\usepackage{caption}
\usepackage{wrapfig}
\usepackage{enumitem}
\usepackage{subcaption}
\IfFileExists{colortbl.sty}{\usepackage{colortbl}}{\newcommand{\rowcolor}[1]{}}
\usepackage[T1]{fontenc}
\usepackage[hypcap=false]{caption}

\title{On the Pitfalls of Verbalized Confidence Priors for Calibrating Large Reasoning Models}

\author{
  Shuoyuan Wang\textsuperscript{1},\enspace
  Beier Luo\textsuperscript{2},\enspace
  Hao Zeng\textsuperscript{3,1},\enspace
  Chengyao Yu\textsuperscript{1},\enspace
  Songxin Zhang\textsuperscript{1},\\
  \textbf{Zejian Xie\textsuperscript{1},\enspace
  Bingyi Jing\textsuperscript{4,5},\enspace
  Hongxin Wei\textsuperscript{1}}\thanks{Correspond to \texttt{weihx@sustech.edu.cn}.} \\
  \textsuperscript{1}Southern University of Science and Technology\quad
  \textsuperscript{2}Nanyang Technological University \\
  \textsuperscript{3}University of Electronic Science and Technology of China \\
  \textsuperscript{4}The Chinese University of Hong Kong, Shenzhen\quad
  \textsuperscript{5}Shenzhen Loop Area Institute
}

\iclrfinalcopy

\definecolor{mydarkred}{rgb}{0.6,0,0}
\definecolor{mydarkgreen}{rgb}{0,0.6,0}
\definecolor{mydarkblue}{rgb}{0.1,0.2,0.55}
\definecolor{calibrow}{gray}{0.94}

\newtheorem{proposition}{Proposition}

\renewcommand\arraystretch{1.2}

\usepackage[colorlinks,
linkcolor=mydarkred,
citecolor=mydarkgreen,
urlcolor=mydarkblue]{hyperref}

\begin{document}

\maketitle

\begin{abstract}
Large reasoning models (LRMs) often suffer from overconfidence when expressing their uncertainty.
Confidence-aware reinforcement learning (RL) offers a promising way to optimize calibration.
However, it relies on on-policy rollouts and is thus constrained by the model's pre-RL confidence distribution, which we term \emph{confidence prior}.
In this work, we reveal that off-the-shelf LRMs exhibit a confidence prior heavily concentrated on a few high values, which persists throughout RL.
Theoretically, we prove that this concentration suppresses policy gradient updates for rarely sampled confidence values and inflates the lower bound on expected Brier risk.
To overcome this exploration bottleneck, we propose \textbf{CalibSFT}, a plug-and-play supervised fine-tuning stage that shapes a calibrated confidence prior with broad support before RL.
For each question, CalibSFT constructs confidence targets combining its success rate with response-level correctness, which provably preserves proper-scoring optimality, and then balances training responses across the confidence spectrum to enable diverse confidence exploration during RL.
To learn from incorrect responses without imitating their reasoning, CalibSFT introduces correctness-conditional supervision, guiding confidence across all responses while supervising reasoning only on correct ones.
Across 16 mathematical and general reasoning benchmarks, incorporating CalibSFT reduces calibration errors and improves discrimination across five representative RL algorithms while preserving comparable accuracy.
Furthermore, CalibSFT delivers practical benefits for downstream selective prediction and model routing.
Our code is available at \url{https://github.com/ml-stat-Sustech/verbalized-confidence-training}.
\end{abstract}

\addtocontents{toc}{\protect\setcounter{tocdepth}{-10}}

\section{Introduction}

Large reasoning models (LRMs) have made substantial progress on complex tasks such as mathematics and coding~\citep{guo2025deepseek,yang2025qwen3}.
However, these accuracy gains do not imply reliable confidence: LRMs often remain overconfident in incorrect answers~\citep{mei2025reasoning}.
A reliable model should express \emph{well-calibrated} confidence~\citep{guo2017calibration}, matching its probability of correctness.
Calibrated confidence indicates when an answer can be trusted, which enables practical applications such as selective prediction~\citep{shen2026provable} and model routing~\citep{hao2026racer}.

Beyond estimates derived from token probabilities, internal representations, or sampling consistency~\citep{luo2025your,srey2026signals,kuhn2023semantic},
models can directly verbalize confidence alongside their answers.
Unlike these alternatives, verbalized confidence can be obtained in a single black-box generation.
However, off-the-shelf LRMs typically produce miscalibrated scores that concentrate on a few high values regardless of correctness~\citep{mei2025reasoning,cacioli2026saturation}.
To address this issue, existing post-training approaches calibrate verbalized confidence via either supervised fine-tuning (SFT) on constructed targets~\citep{xu2024sayself,zhang2026lovec} or reinforcement learning (RL) guided by calibration rewards~\citep{damani2026beyond,ma2026decoupling}.

Among these approaches, confidence-aware RL is particularly promising owing to its superior generalization beyond the training distribution~\citep{chu2025sft}.
However, because confidence-aware RL explores via on-policy rollouts, its optimization is fundamentally constrained by the confidence distribution before RL, which we term \emph{confidence prior}.
In practice, this prior concentrates heavily on a narrow set of high values across both correct and incorrect responses (Figure~\ref{fig:initial_confidence_prior}).
Although calibration rewards explicitly penalize miscalibration, this concentration persists throughout policy optimization and limits the exploration of diverse confidence values (Figure~\ref{fig:rlcr_training_calibration}).
Our theoretical analysis in Section~\ref{sec:motivation} reveals that the policy gradient for each confidence value scales with its sampling probability, which suppresses updates on rarely sampled values.
The analysis further shows that a concentrated prior induces miscalibration by raising the lower bound on expected Brier risk.
As a result, post-RL confidence fails to reflect task difficulty in practice: mean confidence on MinervaMath is only 9.7\% lower than on Math500 despite a 39.1\% drop in Pass@1 (Figure~\ref{fig:rlcr_test_calibration}).

To overcome this exploration bottleneck, we propose \textbf{CalibSFT}, a supervised fine-tuning stage that shapes a calibrated confidence prior with broad support before confidence-aware RL.
Specifically, for each training question, CalibSFT samples multiple responses from the base model and assigns each a confidence target that balances the question-level success rate with response-level correctness.
We further prove that this mixed target preserves the question-level optimal confidence under any strictly proper scoring rule.
To enable diverse confidence exploration during RL, it employs target-balanced sampling, drawing an equal number of responses for each target value.
During fine-tuning, it introduces correctness-conditional supervision, guiding confidence on all responses while restricting reasoning and answer supervision to correct ones, so that the model learns low confidence on incorrect responses without imitating their solutions.
Since CalibSFT reshapes the confidence prior without altering RL objectives, it can be seamlessly integrated with any confidence-aware RL framework.

We evaluate CalibSFT on 16 benchmarks spanning both in-distribution mathematical reasoning and out-of-distribution general reasoning of varying difficulty.
Across 5 confidence-aware RL methods, initializing with CalibSFT outperforms base-model initialization in both calibration and discrimination without compromising task accuracy (Table~\ref{tab:main_results}).
Consistent with our analysis, CalibSFT mitigates confidence concentration after RL (Figure~\ref{fig:confidence_distribution}) and strengthens the correlation between task-level confidence and accuracy (Table~\ref{tab:confidence_correlation}).
It also remains effective across different confidence formats and model families (Tables~\ref{tab:format_calibsft} and~\ref{tab:gemma_calibsft}).
Furthermore, CalibSFT provides practical benefits for downstream applications like selective prediction and model routing (Appendix~\ref{sec:selective_risk_control}--\ref{sec:model_routing}).

Our contributions are summarized as follows:
\begin{enumerate}[leftmargin=2em,itemsep=1pt,topsep=1pt]
    \item We identify that LRMs exhibit a concentrated confidence prior that limits on-policy RL exploration, and theoretically prove that this concentration suppresses policy gradients and hinders calibration by increasing the lower bound on expected Brier risk.
    \item We propose CalibSFT, a supervised fine-tuning stage that constructs proper-scoring confidence targets and balances them across the confidence spectrum to enable RL exploration.
    \item Extensive evaluations across 16 benchmarks demonstrate that CalibSFT delivers substantial calibration and discrimination gains across all five RL baselines, while exhibiting remarkable robustness across diverse confidence formats, architectures, and downstream applications.
\end{enumerate}

\section{Preliminaries}
In this work, we study the calibration of verbalized numerical confidence in large reasoning language models (LRMs)~\citep{xiong2024can,damani2026beyond}.
Given a question $x$, an LRM $\pi_\theta$ generates a response $o = (r, a, c)$, where $r$ is a reasoning trace, $a$ is an answer, and $c \in [0, 1]$ is the model's estimated probability that $a$ is correct.
The model verbalizes $c$ conditioned on the prefix $(x, r, a)$ via $\pi_\theta(c \mid x, r, a)$.
An automated verifier $V$ evaluates the answer against the ground truth to provide a correctness label $z = V(x, a) \in \{0, 1\}$.

\subsection{Calibration Metrics}

To evaluate verbalized confidence $c$, we assess both calibration and discrimination.
Let $C\in[0,1]$ and $Z\in\{0,1\}$ denote confidence and correctness random variables, respectively.
Confidence is perfectly calibrated if $\Pr(Z=1\mid C)=C$ almost surely, and discriminative if it ranks correct responses above incorrect ones.
We measure calibration via \textbf{Expected Calibration Error (ECE)}~\citep{guo2017calibration}, defined as $\mathbb E_C[|\Pr(Z=1\mid C)-C|]$.
With $N$ responses in $B$ bins $\{\mathcal I_b\}_{b=1}^{B}$, ECE is estimated as:

\begin{equation}
    \operatorname{ECE}
    =\sum_{b=1}^{B}\frac{|\mathcal I_b|}{N}
    \left|\operatorname{acc}(\mathcal I_b)-
    \operatorname{conf}(\mathcal I_b)\right|,
\end{equation}
where $\operatorname{acc}(\mathcal I_b)$ and $\operatorname{conf}(\mathcal I_b)$ are the mean correctness and confidence in bin $b$.
While ECE assesses bin-level calibration, the \textbf{Brier score}~\citep{brier1950verification} computes response-level squared error,
$\frac{1}{N}\sum_{j=1}^{N}(c_j-z_j)^2$.
However, calibration does not guarantee discrimination. For example,
assigning constant confidence equal to overall accuracy yields zero ECE while giving correct and incorrect responses
identical scores~\citep{eusebi2026reading}.
We therefore report \textbf{AUROC} to evaluate threshold-agnostic discrimination between correct and incorrect answers.

\subsection{Confidence-aware Reinforcement Learning}

\paragraph{Confidence-aware rewards.}
Reinforcement learning with verifiable rewards (RLVR) trains the policy using the correctness label $z$ as the reward.
To optimize calibration alongside accuracy, confidence-aware RL introduces an additional calibration reward depending on both $c$ and $z$.
For example, RLCR~\citep{damani2026beyond} adopts the negative Brier loss, yielding the combined reward:
\begin{equation}
    R(o)=z-(c-z)^2.
\end{equation}

\paragraph{Policy optimization.}
Confidence-aware RL typically optimizes this reward with Group Relative
Policy Optimization (GRPO)~\citep{shao2024deepseekmath}. For each
question, GRPO samples $G$ responses $\{o_i\}_{i=1}^{G}$ from $\pi_{\theta_{\mathrm{old}}}$ and normalizes the group-wise rewards
$R_i=R(o_i)$ to obtain the advantage:
\begin{equation}
    \widehat A_i=\frac{R_i-\bar R}{\sigma_R+\epsilon},\qquad
    \bar R=\frac{1}{G}\sum_{j=1}^{G}R_j,\qquad
    \sigma_R=\operatorname{std}\!\left(\{R_j\}_{j=1}^{G}\right),
\end{equation}
where $\epsilon>0$ ensures numerical stability. GRPO then updates the
policy by maximizing
\begin{equation}
J_{\mathrm{GRPO}}(\theta)=\mathbb E\Bigg[\frac{1}{G}\sum_{i=1}^{G}
\frac{1}{L_i}\sum_{t=1}^{L_i}\min\Big(\rho_{i,t}(\theta)\widehat A_i,\,
\operatorname{clip}\!\left(\rho_{i,t}(\theta),1-\epsilon_{\mathrm{clip}},
1+\epsilon_{\mathrm{clip}}\right)\widehat A_i\Big)\Bigg],
\end{equation}
where $o_{i,t}$ is the $t$-th token of $o_i$, $L_i$ is the length of
$o_i$, $\rho_{i,t}(\theta)=\pi_\theta(o_{i,t}\mid x,o_{i,<t})/
\pi_{\theta_{\mathrm{old}}}(o_{i,t}\mid x,o_{i,<t})$ is the importance
ratio, and $\epsilon_{\mathrm{clip}}$ is the clipping parameter.
Because this optimization is on-policy, it only learns from confidence values sampled during rollouts, which motivates us to examine the confidence distribution of LRMs before and during RL.

\section{Motivation}
\label{sec:motivation}

While confidence-aware RL penalizes overconfidence
through explicit reward formulations~\citep{damani2026beyond,ma2026decoupling,yang2026scaling},
its efficacy remains constrained by the support of sampled confidence values.
To understand this limitation, we examine the confidence prior before RL, its dynamics during RL, and calibration at test time.

\begin{figure}[t]
    \centering
    \includegraphics[width=\linewidth]{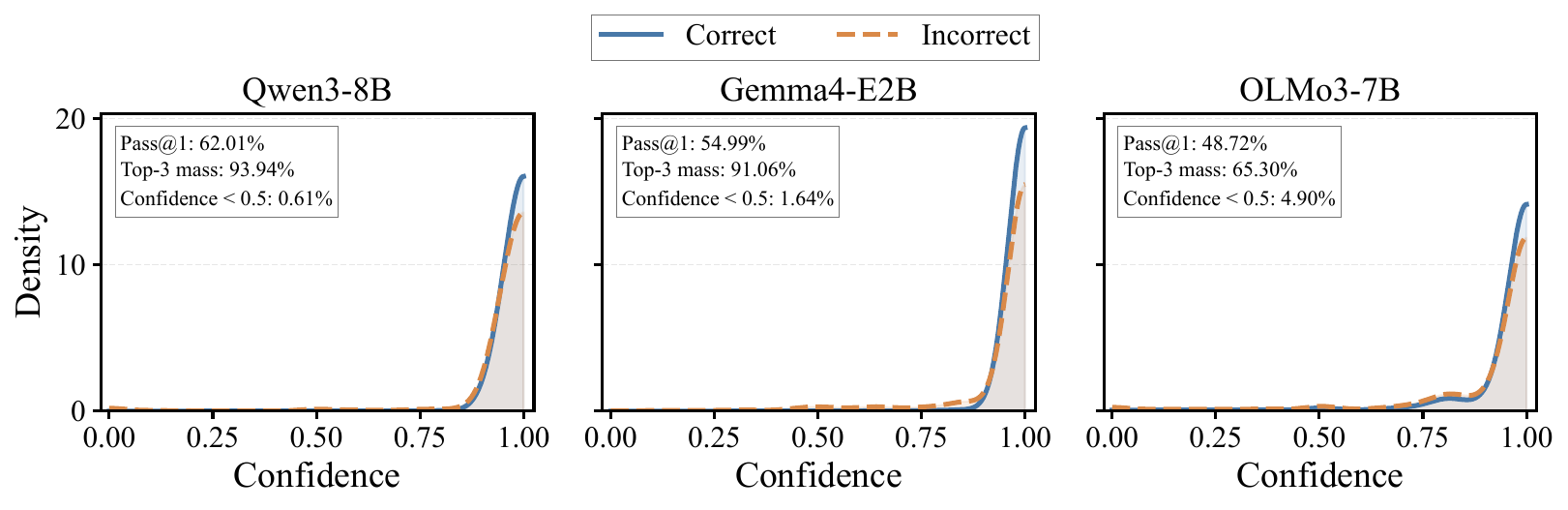}
    \caption{Confidence distributions from three base models on DeepScaleR-Train.
    Confidence concentrates on a few high values for both correct and
    incorrect responses.}
    \label{fig:initial_confidence_prior}
\end{figure}

\paragraph{Confidence priors concentrate on a few high values.}
We refer to the confidence distribution before RL as the \emph{confidence prior}.
To characterize this prior, we sample 10 responses per question on DeepScaleR-Train using
Qwen3-8B~\citep{yang2025qwen3}, Gemma4-E2B~\citep{team2026gemma},
and OLMo3-7B~\citep{olmo2025olmo3}.
As shown in Figure~\ref{fig:initial_confidence_prior},
all models place most of their confidence mass on a few high values, with fewer than 5\% of responses below 0.5.
This concentration at high confidence is also observed among incorrect answers.
Consequently, RL starts from a concentrated confidence prior and rarely samples low confidence values early in training.

\paragraph{RL improves calibration but confidence concentration persists.}
To examine whether confidence-aware RL overcomes this concentration, we train Qwen3-8B with RLCR~\citep{damani2026beyond} on DeepScaleR-Train.
As shown in Figure~\ref{fig:rlcr_training_calibration}(a), while the calibration reward lowers ECE, confidence discrimination (AUROC) fails to increase.
Figure~\ref{fig:rlcr_training_calibration}(b) further explains this failure: the number of distinct confidence values sampled per question closely tracks the confidence token entropy,
which remains low throughout training and limits exploration across the confidence space.
Consequently, the model merely shifts probability mass among a few dominant values to satisfy average calibration, without learning to distinguish correct from incorrect responses.
In Appendix~\ref{sec:training_dynamics}, we show that this restricted exploration and low confidence entropy persist across other confidence-aware RL methods.

To understand why this concentration persists,
we formulate verbalized confidence generation as action selection under a softmax policy~\citep{garg2022alternate}.
Let $h = \phi(x, r, a)$ denote the context representation on which the policy conditions to verbalize confidence, so that $\pi_\theta(c \mid x, r, a) = \pi_\theta(c \mid h)$, and let $p_h = \Pr(Z = 1 \mid h)$.
Since $Z = V(x, a)$ is determined by the prefix and $C$ depends on the prefix only through $h$, we have $C \perp Z \mid h$.

\begin{proposition}[Policy gradient suppression under concentrated priors]
\label{prop:confidence_saturation}
Given a context representation $h$ and a finite set of confidence values $\{c_1, \ldots, c_K\} \subseteq [0, 1]$, let $\eta = (\eta_1, \ldots, \eta_K)$ denote the logits with $\pi_i = \Pr(C = c_i \mid h) = \exp(\eta_i)/\sum_{j=1}^K\exp(\eta_j)$.
Define the expected reward $u_i(h) = -\mathbb E[(c_i - Z)^2 \mid h]$.
Let $J_h(\eta) = \sum_{i=1}^K \pi_i u_i(h)$ and $\Delta_h = \max_i u_i(h) - \min_i u_i(h) \le 1$. Then
\begin{equation}
    \label{eq:logit_gradient}
    \frac{\partial J_h}{\partial\eta_i} = \pi_i \bigl(u_i(h) - J_h\bigr),
    \qquad
    \left|\frac{\partial J_h}{\partial\eta_i}\right| \le \pi_i \Delta_h.
\end{equation}
If $\pi_k = 1 - \varepsilon$ for some dominant action $k \in \{1, \ldots, K\}$, then
\begin{equation}
    \label{eq:logit_gradient_bound}
    \|\nabla_\eta J_h\|_1 \le 2\varepsilon \Delta_h \le 2\varepsilon.
\end{equation}
\end{proposition}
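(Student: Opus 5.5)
The plan is to differentiate the softmax directly, bound each coordinate, and then sum the coordinates using the concentration assumption. Write $u_i = u_i(h)$ for short.

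\textbf{Step 1 (gradient formula).} Use the standard softmax derivative $\partial \pi_j/\partial\eta_i = \pi_j(\mathbb{1}[i=j]-\pi_i)$. This gives
\[
\frac{\partial J_h}{\partial\eta_i}=\sum_j u_j\pi_j(\mathbb{1}[i=j]-\pi_i)=\pi_i u_i-\pi_i J_h ,
\]
which is the first identity in \eqref{eq:logit_gradient}.

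\textbf{Step 2 (per-coordinate bound).} Since $J_h$ is a convex combination of the $u_j$, it lies in $[\min_j u_j,\max_j u_j]$. Hence $|u_i-J_h|\le\Delta_h$, and therefore $|\partial J_h/\partial\eta_i|\le\pi_i\Delta_h$.

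For $\Delta_h\le 1$: both $c_i\in[0,1]$ and $Z\in\{0,1\}$, so $(c_i-Z)^2\in[0,1]$. It follows that every $u_i\in[-1,0]$, which gives $\Delta_h\le 1$.

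\textbf{Step 3 (the $\ell_1$ bound).} This is the one step needing care. Summing the per-coordinate bound naively gives only $\sum_i\pi_i\Delta_h=\Delta_h$, which is too weak. The fix is to exploit the identity $\sum_i \partial J_h/\partial\eta_i = \sum_i\pi_i u_i - J_h = 0$.

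\begin{itemize}
\item The components are zero-sum, so the dominant component is minus the sum of the others:
\[
\left|\frac{\partial J_h}{\partial\eta_k}\right| = \left|\sum_{i\ne k}\frac{\partial J_h}{\partial\eta_i}\right| \le \sum_{i\ne k}\left|\frac{\partial J_h}{\partial\eta_i}\right|.
\]
\item Using Step 2 on the non-dominant terms, $\sum_{i\ne k}|\partial J_h/\partial\eta_i| \le \Delta_h\sum_{i\ne k}\pi_i=\varepsilon\Delta_h$.
\end{itemize}

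Adding the two contributions gives $\|\nabla_\eta J_h\|_1\le 2\varepsilon\Delta_h\le 2\varepsilon$, which is \eqref{eq:logit_gradient_bound}.

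As a sanity check, the dominant coordinate can also be bounded directly. Write
\[
J_h-u_k=\sum_{i\ne k}\pi_i(u_i-u_k),
\]
so $|J_h-u_k|\le\varepsilon\Delta_h$. Multiplying by $\pi_k\le 1$ gives the same bound on the $k$-th term.
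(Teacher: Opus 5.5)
Your proof is correct and follows the paper's through Steps 1--2. In Step 3, the paper's main argument is exactly your ``sanity check'': it writes $u_k-J_h=\sum_{i\ne k}\pi_i(u_k-u_i)$, multiplies by $\pi_k=1-\varepsilon$, and so obtains the marginally sharper bound $(2-\varepsilon)\varepsilon\Delta_h$. Your zero-sum route via $\sum_i \partial J_h/\partial\eta_i=0$ lands exactly on $2\varepsilon\Delta_h$ and is equally valid.

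The only other difference is cosmetic. The paper gets $\Delta_h\le 1$ from the expansion $u_i(h)=-[p_h(1-p_h)+(c_i-p_h)^2]$ rather than from $(c_i-Z)^2\in[0,1]$.
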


Proposition~\ref{prop:confidence_saturation} shows that the logit gradient for any action $c_i$ scales with its sampling probability $\pi_i$, consistent with the $\mathcal{O}(\pi_i)$ gradient decay in softmax policies~\citep{mei2020global}.
Under a concentrated prior ($\pi_k = 1 - \varepsilon$), the total gradient norm is bounded by $2\varepsilon\Delta_h$.
Crucially, when the dominant value $c_k$ is far from $p_h$, the gradient that shifts probability mass away from $c_k$ is also bounded by $\varepsilon\Delta_h$, which hinders on-policy RL from escaping this suboptimal initialization and explains why it struggles to correct overconfidence even under calibration rewards.
We provide the proof in Appendix~\ref{sec:confidence_saturation_proof}.

\paragraph{Persistent confidence concentration hinders downstream calibration.}
Beyond restricting training exploration, this concentrated prior misaligns confidence with task difficulty at test time.
As shown in Figure~\ref{fig:rlcr_test_calibration}, across two benchmarks of contrasting difficulty (Math500 and MinervaMath), Pass@1 drops sharply by 39.1\%, yet mean confidence changes by only 9.7\%.
Because the RL-trained model still concentrates its confidence on a few high values, it struggles to assign lower confidence to hard questions.
To theoretically characterize the cost of this miscalibration,
we analyze the expected Brier loss $\mathbb E[(C-Z)^2 \mid h]$~\citep{gneiting2007strictly} and establish the following lower bound:

\begin{proposition}[Brier risk under prior misalignment]
\label{prop:confidence_support}
Suppose $C \perp Z \mid h$. For any $\delta > 0$, let $m_\delta(h) = \Pr(|C - p_h| < \delta \mid h)$. Then
\begin{equation}
    \label{eq:brier_lower_bound}
    \mathbb E[(C-Z)^2 \mid h] \ge p_h(1-p_h) + \delta^2 \bigl(1-m_\delta(h)\bigr).
\end{equation}
\end{proposition}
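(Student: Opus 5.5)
The plan is to expand the conditional Brier loss with the bias--variance decomposition and then bound the excess term with a Markov-type argument.

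\emph{Step 1: decomposition.} Condition on $h$ and use $C \perp Z \mid h$. Then $\mathbb E[CZ\mid h]=p_h\,\mathbb E[C\mid h]$, and since $Z\in\{0,1\}$ we have $Z^2=Z$. Writing $(C-Z)^2=\bigl((C-p_h)+(p_h-Z)\bigr)^2$ and expanding gives
\begin{equation*}
\mathbb E[(C-Z)^2\mid h]=\mathbb E[(C-p_h)^2\mid h]+2\,\mathbb E[(C-p_h)(p_h-Z)\mid h]+\mathbb E[(p_h-Z)^2\mid h].
\end{equation*}
The last term equals $p_h(1-p_h)$. The cross term factorizes by conditional independence into $2\,\mathbb E[C-p_h\mid h]\cdot\mathbb E[p_h-Z\mid h]$, and the second factor is $0$ by the definition of $p_h$. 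This leaves the exact identity
\begin{equation*}
\mathbb E[(C-Z)^2\mid h]=p_h(1-p_h)+\mathbb E[(C-p_h)^2\mid h].
\end{equation*}

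\emph{Step 2: bounding the excess term.} Split on the event $\{|C-p_h|\ge\delta\}$. The integrand $(C-p_h)^2$ is nonnegative, so we may drop its contribution on the complement. On the event itself the integrand is at least $\delta^2$. Hence
\begin{equation*}
\mathbb E[(C-p_h)^2\mid h]\ge\delta^2\Pr(|C-p_h|\ge\delta\mid h)=\delta^2\bigl(1-m_\delta(h)\bigr),
\end{equation*}
which is the Markov/Chebyshev step. Substituting into Step 1 yields \eqref{eq:brier_lower_bound}.

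\emph{Main obstacle.} The only delicate point is justifying that the cross term vanishes. This requires that $C$ carries no information about $Z$ beyond $h$. The paper argues this just before Proposition~\ref{prop:confidence_saturation}: $Z$ is a function of the prefix and $C$ depends on the prefix only through $h$. I will invoke this assumption explicitly, and I will note that all expectations are finite because $C,Z\in[0,1]$. The remaining steps are routine.
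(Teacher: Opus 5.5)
Your proof is correct and follows the paper's argument essentially line for line. The paper likewise uses $C \perp Z \mid h$ to make the cross term vanish, obtains the exact identity $\mathbb E[(C-Z)^2\mid h]=p_h(1-p_h)+\mathbb E[(C-p_h)^2\mid h]$, and then lower-bounds the excess term by restricting to the event $\{|C-p_h|\ge\delta\}$; the only thing to note is that conditional independence is already a hypothesis of the proposition, so you need not re-derive it from the prefix argument.
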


Proposition~\ref{prop:confidence_support} lower-bounds the expected Brier loss
by the conditional variance $p_h(1-p_h)$ and an excess error $\delta^2\bigl(1-m_\delta(h)\bigr)$ from confidence deviations.
When the policy rarely samples values near $p_h$ (i.e., $m_\delta(h)\approx 0$), this second term approaches $\delta^2$.
Because confidence-aware RL only receives reward feedback on sampled values and suppresses gradients on rarely sampled values (Proposition~\ref{prop:confidence_saturation}), calibration rewards struggle to allocate mass to values near $p_h$ that the policy rarely explores.
In Appendix~\ref{sec:confidence_support_proof}, we provide the full proof and extend this bound to best-of-$G$ sampling.

\begin{figure}[t]
    \centering
    \begin{minipage}[t]{0.66\linewidth}
        \centering
        \includegraphics[width=\linewidth]{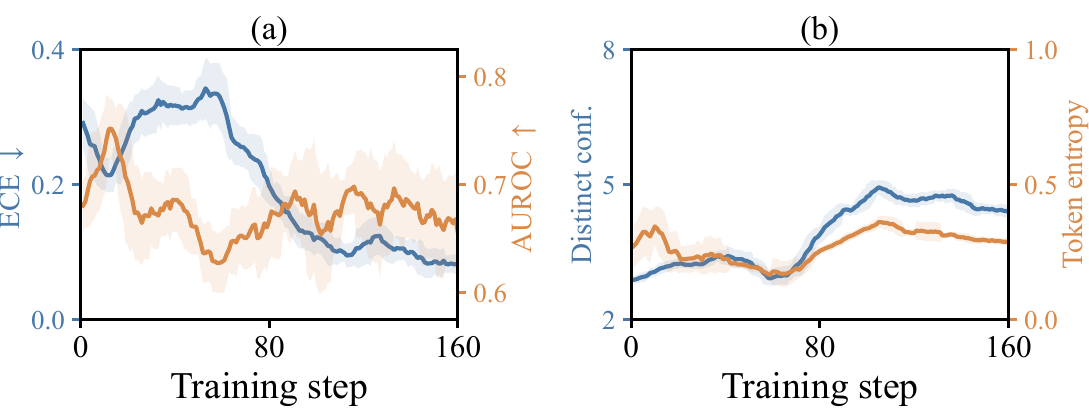}
        \caption{Training dynamics of RLCR on Qwen3-8B.
        (a) The calibration reward lowers ECE, but AUROC does not improve.
        (b) The number of distinct confidence values sampled per question closely tracks confidence token entropy, and their persistently low levels prevent sufficient exploration across the confidence space.}
        \label{fig:rlcr_training_calibration}
    \end{minipage}\hfill
    \begin{minipage}[t]{0.32\linewidth}
        \centering
        \includegraphics[width=\linewidth]{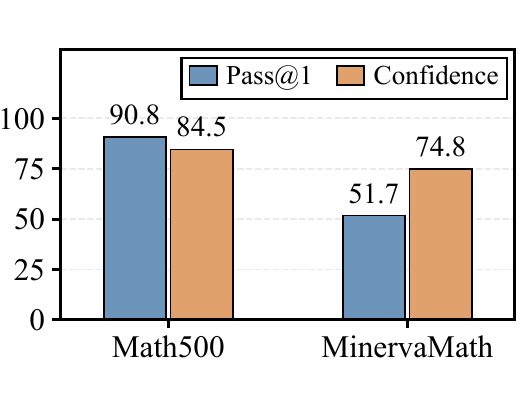}
        \caption{Pass@1 and mean confidence on Math500 and MinervaMath of
        RLCR on Qwen3-8B. Mean confidence shifts far less than Pass@1 across tasks.}
        \label{fig:rlcr_test_calibration}
    \end{minipage}
\end{figure}

\section{Method}

Section~\ref{sec:motivation} shows that confidence concentration persists throughout RL, since rarely sampled confidence values receive minimal policy gradients.
Instead of relying on on-policy exploration, we propose CalibSFT to shape a well-calibrated confidence prior before policy optimization.
CalibSFT first constructs confidence targets that mix question-level success rates with response-level correctness, preserving each question's optimal confidence while supporting within-question discrimination.
It then applies target-balanced sampling to broaden support across the confidence spectrum, so that RL can explore diverse confidence values.
Finally, it supervises confidence on all responses but restricts reasoning and answer supervision to correct ones, so that the model learns to express low confidence on errors without reinforcing incorrect solutions.

\subsection{Proper-Scoring Confidence Targets}
Under strictly proper scoring rules, optimal confidence matches the true correctness probability~\citep{gneiting2007strictly}.
As this probability is not observed directly, we estimate it from base-model rollouts.
For each question $x$, we sample $n$ responses with correctness $z_i \in \{0, 1\}$ and success rate $q_x = \frac{1}{n} \sum_{i=1}^n z_i$.
However, $q_x$ alone assigns identical targets to all responses and removes within-question discrimination.
To balance difficulty calibration with response discrimination, we construct the confidence target for response $i$ as:

\begin{equation}
    \label{eq:calibsft_target}
    c_i^* = \lambda q_x + (1 - \lambda) z_i,
\end{equation}
where the mixing weight $\lambda\in[0,1]$ balances question-level success rate with response correctness.
We further show that $c_i^*$ preserves the question-level optimal confidence under any strictly proper scoring rule.

\begin{proposition}[Proper-scoring optimality of the mixed target]
\label{prop:confidence_target_moments}
Let $\ell(\hat c, z)$ be a strictly proper scoring loss for a prediction $\hat c \in [0, 1]$ and $z \in \{0, 1\}$, extended to $t \in [0, 1]$ via $\ell(\hat c, t) = t \, \ell(\hat c, 1) + (1 - t) \, \ell(\hat c, 0)$.
For a question $x$, let $Z_1, \ldots, Z_n$ denote the correctness of $n$ independent base-model rollouts with $p_x = \Pr(Z_i = 1 \mid x)$, and let $C_i^* = \lambda Q_x + (1 - \lambda) Z_i$ with $Q_x = \frac{1}{n} \sum_{j=1}^n Z_j$.
Then $\mathbb E[C_i^* \mid x] = p_x$, and for any $\hat c$ at which $\ell$ is finite,
\begin{equation}
    \label{eq:proper_scoring_equivalence}
    \mathbb E[\ell(\hat c, C_i^*) \mid x] = \mathbb E[\ell(\hat c, Z_i) \mid x].
\end{equation}
Consequently, $p_x$ remains the unique minimizer of the expected scoring loss on question $x$.
\end{proposition}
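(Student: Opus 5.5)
The plan is to use that the extended loss $\ell(\hat c, t)$ is affine in its second argument, so taking expectations commutes with evaluating the loss at the target. First, compute the mean of the target. By linearity and the fact that the rollouts are identically distributed given $x$, $\mathbb E[Q_x \mid x] = \frac1n\sum_j \Pr(Z_j=1\mid x) = p_x$ and $\mathbb E[Z_i\mid x]=p_x$. Hence $\mathbb E[C_i^*\mid x] = \lambda p_x + (1-\lambda)p_x = p_x$. Independence is not needed for this step, only the common marginal $p_x$.

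Second, establish the scoring-loss identity. Since $C_i^*\in[0,1]$ (it is a convex combination of values in $[0,1]$), the extension gives
\begin{equation*}
\ell(\hat c, C_i^*) = C_i^*\,\ell(\hat c,1) + (1-C_i^*)\,\ell(\hat c,0).
\end{equation*}
This is an affine function of $C_i^*$ with deterministic coefficients, because $\hat c$ is fixed. When $\ell(\hat c,1)$ and $\ell(\hat c,0)$ are both finite, which is the hypothesis ``$\ell$ is finite at $\hat c$'', we may take conditional expectations and use the first step. This yields $p_x\,\ell(\hat c,1)+(1-p_x)\,\ell(\hat c,0)$. Applying the same computation to $\ell(\hat c, Z_i)$, with $\mathbb E[Z_i\mid x]=p_x$, gives the identical expression, which proves the displayed equality.

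Third, deduce uniqueness of the minimizer. The common value $p_x\ell(\hat c,1)+(1-p_x)\ell(\hat c,0)$ is exactly the expected score of forecast $\hat c$ under a Bernoulli$(p_x)$ outcome. Strict propriety says this is uniquely minimized at $\hat c=p_x$. So $p_x$ is the unique minimizer of $\hat c\mapsto\mathbb E[\ell(\hat c,C_i^*)\mid x]$. For $\hat c$ where $\ell$ is infinite, I will note that the expected loss is $+\infty$ or at least not smaller than the value at $p_x$. This uses the standard convention that the value at $p_x$ is finite. If $p_x\in\{0,1\}$, the infinite value is attained only with probability zero, and the convention $0\cdot\infty=0$ handles it.

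The main obstacle is purely technical: handling possibly infinite losses, as with the log score at $\hat c\in\{0,1\}$, so that linearity of expectation is legitimate. I will restrict the equality to finite $\hat c$, as the statement does, and treat the boundary cases separately in the uniqueness step. Everything else is linearity.
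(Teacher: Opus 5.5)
Your proposal is correct and follows the paper's proof essentially step for step. Linearity gives $\mathbb E[C_i^*\mid x]=p_x$, the affineness of $\ell(\hat c,\cdot)$ turns both expected losses into $p_x\,\ell(\hat c,1)+(1-p_x)\,\ell(\hat c,0)$, and strict propriety then gives uniqueness of the minimizer at $p_x$. Your two refinements go slightly beyond the paper: you note that the mean needs only the common marginal $p_x$, not independence, and you treat $\hat c$ with infinite loss explicitly. The paper omits both, and instead also computes the target variance, which the statement does not require.
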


Proposition~\ref{prop:confidence_target_moments} shows that, under the rollout distribution, the mixed target preserves the optimal confidence $p_x$ for each question.
A larger $n$ reduces target variance, while $\lambda$ sets a target margin of $1-\lambda$ between correct and incorrect responses within a question.
We use $n=50$ and $\lambda=0.5$ by default, and provide the proof, variance derivations, and conditioning analysis in Appendix~\ref{sec:confidence_target_proof}.

In practice, we replace each response's confidence with $c_i^*$, preserving its reasoning trace and answer.
Because many questions are solved by all or none of their rollouts, 61\% of raw targets are exactly 0 or 1, so training directly on this pool would concentrate the SFT prior at these extremes.
We therefore apply \emph{target-balanced sampling}, which draws an equal number of responses for each target value (Appendix~\ref{sec:training_pipelines}), establishing a well-supported initial prior before downstream RL.
Since balancing reweights responses away from the rollout distribution in Proposition~\ref{prop:confidence_target_moments}, we verify in Table~\ref{tab:balance_ablation} that it maintains comparable calibration while broadening confidence support.

\subsection{Correctness-Conditional Supervision}

The constructed data contain both correct and incorrect responses. The
incorrect ones are kept because their lower targets guide the model to
express low confidence on its own errors, whereas imitating their
reasoning would train the model on incorrect solutions.
CalibSFT therefore supervises confidence on all responses and restricts
reasoning and answer supervision to correct ones.

Formally, let $\widetilde o$ denote a response whose confidence has been
replaced by its constructed target, and let $\mathcal T_{r,a}$
and $\mathcal T_c$ denote the positions of its reasoning/answer tokens
and its confidence tokens, respectively.
The CalibSFT objective is then
\begin{equation}
\begin{split}
    \mathcal L_{\mathrm{CalibSFT}}={}&(1-\beta)\,\mathbb E_{\mathrm{correct}}
    \left[-\frac{1}{|\mathcal T_{r,a}|}\sum_{t\in\mathcal T_{r,a}}
    \log\pi_\theta(\widetilde o_t\mid x,\widetilde o_{<t})\right]\\
    &+\beta\,\mathbb E_{\mathrm{all}}
    \left[-\frac{1}{|\mathcal T_c|}\sum_{t\in\mathcal T_c}
    \log\pi_\theta(\widetilde o_t\mid x,\widetilde o_{<t})\right].
\end{split}
\end{equation}
Here $\mathbb E_{\mathrm{correct}}$ averages over the correct responses
and $\mathbb E_{\mathrm{all}}$ over all of them.
Averaging within each segment prevents the longer reasoning sequences
from dominating the loss by token count.
The weight $\beta$ balances
confidence supervision against reasoning supervision, and we set
$\beta=0.5$ so that the two terms contribute equally.

\paragraph{Integration with confidence-aware RL.}
CalibSFT provides an initialization for confidence-aware RL
without changing its objective, and thus applies to different RL methods.
We evaluate CalibSFT across RL methods in Section~\ref{sec:overall_results},
analyze its impact on confidence distributions in Section~\ref{sec:calibration_analysis},
and test its generality across confidence formats and base models in Section~\ref{sec:generality}.
Finally, Appendix~\ref{sec:ablation_studies} ablates the target construction, mixing weight $\lambda$, and supervision design.

\section{Experiments}

\subsection{Experimental Setup}

\paragraph{Baselines.}
We compare methods spanning answer-only and confidence-aware training. \textbf{Base}
is the off-the-shelf checkpoint, while \textbf{RLVR} optimizes only answer correctness.
We evaluate five representative confidence-aware RL baselines covering two training paradigms:
(1)~\textbf{joint training}, which simultaneously optimizes correctness and confidence, including \textbf{RLCR}~\citep{damani2026beyond}, \textbf{CoCA}~\citep{li2026confidence}, \textbf{DCPO}~\citep{ma2026decoupling}, and \textbf{C3RL}~\citep{yang2026scaling}.
(2)~\textbf{two-stage training}, where \textbf{ReDoubt}~\citep{bani2026rewarding} freezes reasoning optimization and tunes confidence on top of a trained RLVR model.
For each method, we compare initialization from the base model and from CalibSFT.
All methods use Qwen3-8B~\citep{yang2025qwen3} as the default backbone within the \textbf{DAPO} training framework~\citep{yu2025dapo}.

\paragraph{Datasets.}
We use \textbf{DeepScaleR-Preview}~\citep{deepscaler2025} as the mathematical training dataset.
We evaluate on a broad range of in-distribution (ID) and out-of-distribution (OOD)
datasets spanning different levels of difficulty.
For ID mathematical reasoning, we use \textbf{DeepScaleR-Eval},
\textbf{MATH-500}~\citep{hendrycks2021math}, \textbf{MinervaMath}~\citep{lewkowycz2022minerva},
\textbf{OlympiadBench}~\citep{he2024olympiadbench}, \textbf{GSM8K}~\citep{cobbe2021gsm8k}, and
\textbf{AIME 2024--2026}~\citep{dekoninck2026matharena}.
For general reasoning, we use \textbf{HotpotQA}~\citep{yang2018hotpotqa},
\textbf{MuSiQue}~\citep{trivedi2022musique}, \textbf{TriviaQA}~\citep{joshi2017triviaqa},
\textbf{NQOpen}~\citep{lee2019latent}, \textbf{PopQA}~\citep{mallen2023trust},
\textbf{WebQuestions}~\citep{berant2013semantic}, \textbf{DROP}~\citep{dua2019drop}, and
\textbf{LiveBenchReasoning}~\citep{white2025livebench}.

\paragraph{Evaluation metrics.}
We evaluate model performance using complementary metrics that capture answer
correctness, confidence discrimination, and absolute calibration. We report \textbf{Pass@1} for answer
correctness, \textbf{AUROC} for ranking correct responses above incorrect responses, and
\textbf{Brier score}~\citep{brier1950verification} and
\textbf{ECE}~\citep{guo2017calibration} for probabilistic calibration.
We present additional metrics for calibration and confidence diversity in Appendix~\ref{sec:additional_metrics}.

\paragraph{Implementation details.}
During evaluation, we use temperature 0.6 and sample 32 responses per problem
for the AIME benchmarks and 4 responses per problem for all other benchmarks. All experiments are conducted on a
single node with eight NVIDIA B20Z GPUs.  We report our training details in Appendix~\ref{sec:detailed_setup} and training cost analysis in Appendix~\ref{sec:training_cost}.

\subsection{Overall Results}
\label{sec:overall_results}

\paragraph{CalibSFT consistently benefits diverse confidence-aware RL methods.}
As shown in Table~\ref{tab:main_results}, initializing RL from CalibSFT
reduces average Brier score and ECE and increases AUROC across all five confidence-aware RL
baselines on both mathematical and general-reasoning benchmarks.
As a concrete example, CalibSFT reduces DCPO's ECE from 29.05 to 8.83 while lifting its
AUROC from 63.57 to 76.00 on MinervaMath.
Moreover, these calibration gains come without sacrificing average accuracy. The average Pass@1 improves by
1.20 in-distribution and 1.49 out-of-distribution across the five methods.
We provide detailed per-dataset results in Tables~\ref{tab:appendix_id_results}--\ref{tab:appendix_ood_results} (Appendix~\ref{sec:per_dataset_results})
and verify stability across random seeds in Appendix~\ref{sec:seed_stability}.
Overall, these results establish CalibSFT as a plug-and-play initialization that consistently
strengthens confidence-aware RL for better calibration.

\paragraph{CalibSFT and RL contribute complementary gains.}
To disentangle their contributions, we compare CalibSFT and RLVR individually.
CalibSFT alone substantially mitigates overconfidence, with ECE dropping from 34.36 to 14.23 in-distribution and from 46.94 to 25.49 out-of-distribution.
In contrast, Base and RLVR remain miscalibrated, indicating that answer correctness alone is insufficient to reshape confidence.
However, calibration does not imply better confidence ranking: CalibSFT's OOD AUROC even declines from 65.71 to 61.20, reflecting the limited generalization of supervised training.
Confidence-aware RL fills exactly this gap by directly optimizing discrimination, and combining it with CalibSFT yields the strongest overall performance.

\input{tables/main_results.tex}

\subsection{Calibration Analysis}
\label{sec:calibration_analysis}

\paragraph{CalibSFT aligns confidence with accuracy across tasks.}
A well-calibrated model should express higher confidence on tasks it can reliably solve and lower confidence on harder ones.
To test this, we compute Pearson and Spearman correlations between per-dataset mean confidence and per-dataset accuracy for each RL method on the ID and OOD splits separately.
As shown in Table~\ref{tab:confidence_correlation}, CalibSFT improves Spearman correlation for all RL baselines on both mathematical and general reasoning benchmarks.
For example, DCPO's Spearman correlation rises from 0.619 to 0.976 on mathematical tasks and from 0.548 to 0.643 on general reasoning.
Pearson correlation also improves consistently across all baselines and splits (e.g., from 0.505 to 0.729 for RLCR on OOD).
These results indicate that CalibSFT helps the model assign higher confidence to tasks it is more likely to answer correctly (see Appendix~\ref{sec:confidence_visualization} for reliability diagrams across benchmarks of varying difficulty).

\input{tables/confidence_correlation.tex}

\begin{figure}[t]
    \centering
    \includegraphics[width=\linewidth]{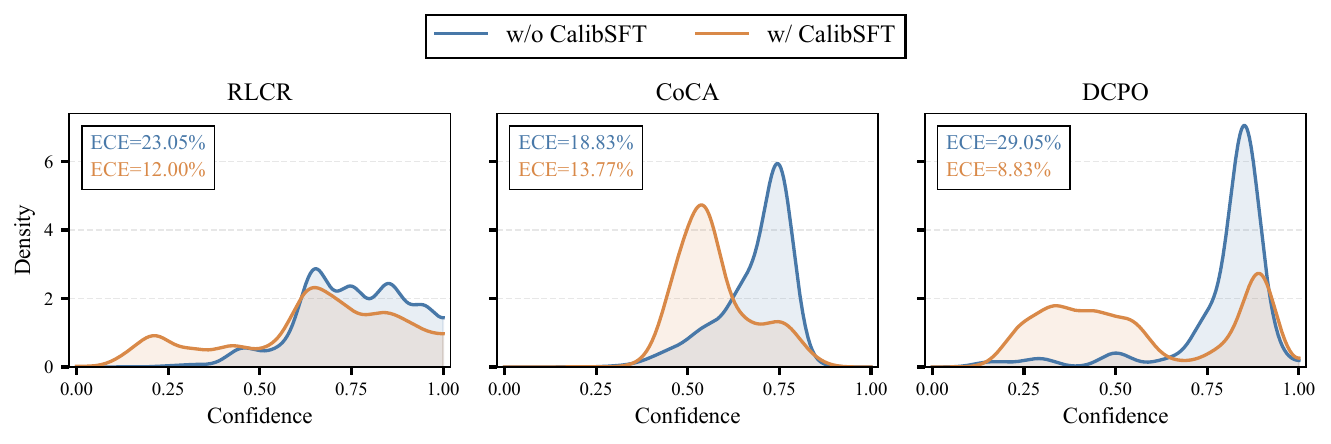}
    \caption{Confidence distributions on MinervaMath for RLCR, CoCA, and DCPO.
    CalibSFT reduces concentration at dominant confidence values and lowers
    ECE for all three methods.}
    \label{fig:confidence_distribution}
\end{figure}

\begin{wrapfigure}{r}{0.58\textwidth}
    \centering
    \vspace{-5pt}
    \includegraphics[width=\linewidth]{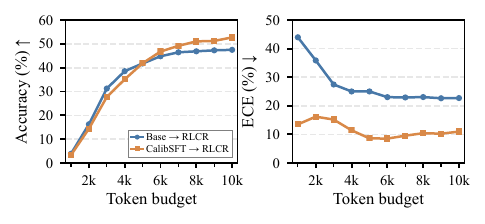}
    \vspace{-15pt}
    \caption{Accuracy ($\uparrow$) and ECE ($\downarrow$) on AIME 2026 across token budgets
    for RLCR. CalibSFT initialization yields lower ECE across all budgets while maintaining accuracy.}
    \label{fig:reasoning_budget}
\end{wrapfigure}

\paragraph{CalibSFT mitigates confidence concentration after RL.}
Prior work observes that confidence estimates often concentrate on a few dominant values even after confidence-aware RL training~\citep{wang2026process,yang2026scaling}.
To examine whether CalibSFT initialization alleviates this concentration, we compare RLCR, CoCA, and DCPO on MinervaMath, with and without CalibSFT.
As shown in Figure~\ref{fig:confidence_distribution}, all three methods trained from Base exhibit heavy peaks at a few high-confidence values, while initializing from CalibSFT flattens these peaks and aligns the mean confidence with the actual accuracy.
For example, DCPO's dominant peak near high confidence is redistributed toward lower values while accuracy is preserved, reducing ECE from 29.05\% to 8.83\%.
These results suggest that CalibSFT initialization propagates lower concentration and better calibration into the downstream RL policy.

\input{tables/format_calibsft.tex}

\input{tables/gemma_calibsft.tex}

\paragraph{The calibration gains of CalibSFT persist across reasoning budgets.}
We compare RLCR with and without CalibSFT on AIME 2026 across a range of test-time token budgets.
As shown in Figure~\ref{fig:reasoning_budget}, CalibSFT achieves substantially lower ECE at every budget while accuracy remains comparable to the RLCR baseline.
For example, at a 10{,}000-token budget, ECE drops from 22.71 to 10.97.

\subsection{Generality of CalibSFT}
\label{sec:generality}

\paragraph{CalibSFT is effective across diverse confidence formats.}
We train RLCR with CalibSFT using four confidence target formats: two-decimal probabilities on a fine grid $\{0.00, 0.01, \ldots, 1.00\}$ or a coarse grid $\{0.00, 0.05, \ldots, 1.00\}$, integer scores from 0 to 9, and linguistic labels (\emph{low}, \emph{medium}, \emph{high}) (see Appendix~\ref{sec:format_details} for details).
As shown in Table~\ref{tab:format_calibsft}, CalibSFT consistently improves both ECE and AUROC across all four formats, indicating that its benefit does not rely on a particular way of expressing confidence.
The reduction in ECE is largest with the fine probability grid, whose granularity lets confidence approximate the probability of correctness most closely.
In Appendix~\ref{sec:confidence_estimators}, we further show that CalibSFT consistently improves calibration under alternative confidence estimators, including internal token probabilities and self-consistency voting.

\paragraph{CalibSFT is agnostic to the base model.}
To examine the transferability of CalibSFT across model architectures, we train Gemma4-E2B~\citep{team2026gemma} using RLCR with CalibSFT, and then evaluate it on three ID and three OOD benchmarks spanning a range of difficulty.
As shown in Table~\ref{tab:gemma_calibsft}, CalibSFT improves both ECE and AUROC on all benchmarks, reducing the average ECE from 23.41 to 12.49 and raising the average AUROC from 70.49 to 73.95.
These results indicate that the calibration benefit of CalibSFT holds across model families on benchmarks of varying difficulty.

\section{Conclusion}

In this work, we address the challenge of verbalized overconfidence in large reasoning models.
We identify that concentrated confidence priors restrict exploration during confidence-aware RL,
and propose CalibSFT to shape a well-calibrated prior with broad support before policy optimization.
CalibSFT derives confidence targets from self-sampled rollouts that provably preserve proper-scoring optimality, and supervises confidence without imitating incorrect reasoning.
Experimental results across 16 benchmarks confirm that CalibSFT improves calibration and discrimination across diverse RL frameworks, establishing a foundation for reliable and trustworthy reasoning models.

\subsection*{AI use statement}
In this work, we used generative AI tools for text polishing, grammar correction, and improving language readability.
All suggested edits were thoroughly reviewed, verified, and finalized by the authors, who take full responsibility for the contents of this paper.

\subsection*{Ethics statement}
This work focuses on understanding and mitigating overconfidence in large reasoning language models to improve their reliability and safety.
All experiments are conducted on publicly available, standard academic benchmarks and open-weight models, and do not involve human subjects, personally identifiable data, or sensitive applications.

\subsection*{Reproducibility statement}
To ensure the reproducibility of our theoretical and empirical findings, complete details are provided throughout the paper and appendix.
For experimental results, the rollout collection process, target-balanced sampling, and full training hyperparameters (e.g., learning rates, batch sizes, and optimization steps) are documented in Appendix~\ref{sec:detailed_setup}.
System prompts for all confidence formats are provided in Table~\ref{tab:format_prompts}, and benchmark grading rules are described in Appendix~\ref{sec:detailed_setup}.
Our code is available at \url{https://github.com/ml-stat-Sustech/verbalized-confidence-training}.

\bibliography{reference}
\bibliographystyle{iclr2027_conference}

\appendix

\addtocontents{toc}{\protect\setcounter{tocdepth}{2}}
\renewcommand{\contentsname}{Appendix}
\clearpage
{\hypersetup{linkcolor=mydarkblue,linktoc=all}\tableofcontents}

\section{Related Work}

\paragraph{Confidence estimation in LLMs.}
Confidence estimates for a model answer can be derived from many signals, of which four
families cover most existing methods and differ in  their model-access requirements
and inference costs~\citep{geng2024survey,liu2025uncertainty}. The most direct signal is the probability
the model assigns to its own answer tokens, used either directly or after recalibration by a layer fitted on labeled
data~\citep{kadavath2022language,liu2024litcab}. Probing methods instead train a classifier
on the hidden states or on the layer-wise trajectory that produces the
answer~\citep{azaria2023internal,eusebi2026reading,srey2026signals}, which requires access
to activations rather than logits. When a model exposes neither, sampling-based methods
recover confidence from behavior alone and measure how far several sampled answers
agree~\citep{kuhn2023semantic,del2026uncertainty,taubenfeld2025confidence}, at the cost of
repeated generations per question. Verbalized confidence removes that cost as well: the
model states a numerical or linguistic score alongside its answer within a single
generation~\citep{lin2022teaching,tian2023just,xiong2024can,yang2024verbalized}. We study
the verbalized setting for reasoning models, where the score follows a reasoning trace and
the final answer. Such a score is informative only once it matches accuracy: the values an
uncalibrated base model reports concentrate on a few points of the
scale~\citep{dai2026rescalingconfidence,cacioli2026saturation} and separate its errors less well
than its own token probabilities do~\citep{ni2024honest,tao2025revisiting}.

\paragraph{Confidence calibration for LLMs.}
Calibration requires the reported confidence to match the empirical probability that the
answer is correct, and a method either rescales a score after the model has produced it or
changes the model that produces it. Rescaling is the standard treatment for
probability-based signals: a temperature or a bias layer is fitted on a labeled held-out
set~\citep{guo2017calibration,liu2024litcab}, and label-free variants align the score with
a reference model or project the two predictive distributions onto a calibrated
one~\citep{luo2025your,kong2026calibration}. Probes need no separate step, since they are
classifiers already trained on correctness
labels~\citep{azaria2023internal,srey2026signals}, and a sampling-based estimate is
calibrated by choosing how agreement among samples maps to a
probability~\citep{kuhn2023semantic,del2026uncertainty}. A verbalized score can be
rescaled the same way, but only through extra queries per question or an extra pass that
rewrites the reported value~\citep{wang2026calibrating,yeh2026retrieval}, and neither
changes how the model generates the score in the first place.

\paragraph{Post-training for verbalized confidence.}
Post-training directly optimizes models to express confidence
alongside their answers. Supervised approaches train on constructed
confidence targets, while reinforcement learning uses rewards
that encourage reliable confidence estimates.
SFT methods derive these targets from correctness signals on the model's own
responses~\citep{lin2022teaching,zhang2024rtuning,jang2025verbalized,liu2024uncertainty,xu2024sayself,hager2025uncertainty,seo2025advice,zhang2026lovec,li2025conftuner}.
Reinforcement learning adds a calibration term to the reward of RLVR, which by itself
optimizes answer correctness and leaves the confidence
unsupervised~\citep{shao2024deepseekmath,yu2025dapo}; the term is a proper scoring rule on
the reported confidence, and the methods differ in where the score is placed, what it is
trained toward, and how credit reaches its
tokens~\citep{leng2025taming,damani2026beyond,bani2026rewarding,tan2026effectiveness,li2026confidence,finlay2026caliber,shi2026sage,yu2026semantic,li2026orce,ma2026decoupling,wang2026process,yang2026scaling,zhang2026collaboration}.
However, on-policy confidence-aware RL relies on confidence values sampled
from the current policy. Our analysis in
Section~\ref{sec:motivation} shows how a concentrated confidence
prior can restrict this exploration.
CalibSFT addresses this limitation by shaping the confidence prior
through supervised fine-tuning before confidence-aware RL.

\section{Theoretical Analysis}

\subsection{Proof of Proposition~\ref{prop:confidence_saturation}}
\label{sec:confidence_saturation_proof}

\begin{proof}
Fix the context representation $h$ and the corresponding expected rewards $u_i(h)$, where $p_h = \Pr(Z = 1 \mid h)$.
Expanding the squared error gives:
\begin{equation}
    u_i(h) = -\mathbb E[(c_i - Z)^2 \mid h] = -\left[ p_h(1 - p_h) + (c_i - p_h)^2 \right].
\end{equation}
Since $c_i, p_h \in [0, 1]$, the reward spread satisfies $\Delta_h = \max_i u_i(h) - \min_i u_i(h) = \max_i (c_i - p_h)^2 - \min_i (c_i - p_h)^2 \le 1$.

Differentiating $J_h(\eta) = \sum_{j=1}^K \pi_j u_j(h)$ with respect to logit $\eta_i$, the softmax derivative yields $\partial \pi_j / \partial \eta_i = \pi_j (\mathbf 1\{j = i\} - \pi_i)$.
It follows that
\begin{align}
    \frac{\partial J_h}{\partial\eta_i}
    &= \sum_{j=1}^K u_j(h) \pi_j (\mathbf 1\{j = i\} - \pi_i) \\
    &= \pi_i u_i(h) - \pi_i \sum_{j=1}^K \pi_j u_j(h) \\
    &= \pi_i \bigl(u_i(h) - J_h\bigr).
\end{align}
Since $J_h$ is a convex combination of $\{u_j(h)\}_{j=1}^K$, we have $\min_j u_j(h) \le J_h \le \max_j u_j(h)$, which implies $|u_i(h) - J_h| \le \Delta_h$.
This establishes the coordinate-wise bound:
\begin{equation}
    \left|\frac{\partial J_h}{\partial\eta_i}\right| \le \pi_i \Delta_h.
\end{equation}
If the policy concentrates on a dominant action $k$ such that $\pi_k = 1 - \varepsilon$, then
\begin{equation}
    |u_k(h) - J_h| = \left| \sum_{i \ne k} \pi_i \bigl(u_k(h) - u_i(h)\bigr) \right| \le \sum_{i \ne k} \pi_i \Delta_h = \varepsilon \Delta_h.
\end{equation}
Consequently, the $L_1$ norm of the gradient satisfies:
\begin{align}
    \|\nabla_\eta J_h\|_1
    &= \pi_k |u_k(h) - J_h| + \sum_{i \ne k} \pi_i |u_i(h) - J_h| \\
    &\le (1 - \varepsilon)\varepsilon \Delta_h + \varepsilon \Delta_h \\
    &= (2 - \varepsilon)\varepsilon \Delta_h \le 2\varepsilon \Delta_h \le 2\varepsilon.
\end{align}
\end{proof}

\paragraph{Connection to confidence-aware RL.}
Under $C \perp Z \mid h$, the gradient is the expectation of the likelihood-ratio estimator $U \nabla_\eta \log \pi(I \mid h)$, where $I \sim \pi(\cdot \mid h)$ and $U = -(c_I - Z)^2$.
In practical algorithms such as RLCR, the total reward is $Z - (c_I - Z)^2$.
The label term $Z$ has conditional expectation $\mathbb E[Z \mid h] = p_h$, which is independent of policy parameters $\eta$ and contributes zero gradient.
Subtracting any baseline $b(h)$ that does not depend on the sampled action $I$ leaves this expectation unchanged.
GRPO's group-mean baseline includes the response's own reward, which scales the expected gradient by $(G-1)/G$ without changing its direction.
Its standard-deviation normalization further introduces a data-dependent scale $1/(\sigma_R+\epsilon)$; treating this scale as fixed within a group, the suppression bound holds up to a positive factor.

\paragraph{Moving beyond a concentrated prior.}
Under the suppression bound, each update moves the logit of a suboptimal dominant value $c_k$ by at most $\mathcal O(\varepsilon)$.
Since policy entropy tends to decrease during RL for reasoning models~\citep{cui2025entropy}, and confidence token entropy remains low throughout our RL runs (Figure~\ref{fig:training_dynamics_support}), $\varepsilon$ stays small and the policy remains concentrated on a few confidence values within the training budget.

\subsection{Proof of Proposition~\ref{prop:confidence_support}}
\label{sec:confidence_support_proof}

\begin{proof}
Recall that $p_h = \Pr(Z = 1 \mid h)$.

Decomposing the error into $C - Z = (C - p_h) - (Z - p_h)$, conditional independence yields:
\begin{equation}
    \mathbb E[(C - p_h)(Z - p_h) \mid h] = \mathbb E[C - p_h \mid h] \cdot \mathbb E[Z - p_h \mid h] = 0,
\end{equation}
since $\mathbb E[Z \mid h] = p_h$. Expanding the squared error therefore gives:
\begin{align}
    \mathbb E[(C - Z)^2 \mid h]
    &= \mathbb E[(Z - p_h)^2 \mid h] + \mathbb E[(C - p_h)^2 \mid h] \\
    &= p_h(1 - p_h) + \mathbb E[(C - p_h)^2 \mid h].
\end{align}
For any $\delta > 0$, the second term satisfies:
\begin{align}
    \mathbb E[(C - p_h)^2 \mid h]
    &\ge \mathbb E\bigl[(C - p_h)^2 \cdot \mathbf 1\{|C - p_h| \ge \delta\} \mid h\bigr] \\
    &\ge \delta^2 \Pr(|C - p_h| \ge \delta \mid h) \\
    &= \delta^2 \bigl(1 - m_\delta(h)\bigr),
\end{align}
where $m_\delta(h) = \Pr(|C - p_h| < \delta \mid h)$. Substituting this back into the equality yields:
\begin{equation}
    \mathbb E[(C - Z)^2 \mid h] \ge p_h(1 - p_h) + \delta^2 \bigl(1 - m_\delta(h)\bigr).
\end{equation}
Thus, assigning little probability mass near $p_h$ incurs an excess Brier loss even when confidence values in that neighborhood remain possible.
\end{proof}

\paragraph{Restricted confidence support.}
As a special case, suppose $\Pr(C \in S_h \mid h) = 1$ for a nonempty set $S_h \subseteq [0, 1]$, and define $d_h = \operatorname{dist}(p_h, S_h) = \inf_{c \in S_h} |c - p_h|$.
If $d_h > 0$, then $m_{d_h}(h) = 0$, and Proposition~\ref{prop:confidence_support} with $\delta = d_h$ gives:
\begin{equation}
    \mathbb E[(C - Z)^2 \mid h] \ge p_h(1 - p_h) + \operatorname{dist}(p_h, S_h)^2.
\end{equation}
For $d_h = 0$, the same bound follows directly from the decomposition above.
If $S_h$ is closed, a predictor that always outputs a nearest point in $S_h$ attains the bound.
When verbalized confidence values concentrate on a small discrete set of high values (Figure~\ref{fig:initial_confidence_prior}), $\operatorname{dist}(p_h, S_h)$ measures the distance from the nearest available value to the true posterior $p_h$, directly translating the concentrated prior into excess Brier risk.

\paragraph{Best-of-$G$ sampling under a shared context.}
GRPO samples one confidence value per response context.
As a more favorable case for exploration, draw $G$ independent confidence samples $C_1, \ldots, C_G$ from $\pi(\cdot \mid h)$ for the same context $h$.
Each sample lies outside $\{c : |c - p_h| < \delta\}$ with probability $1 - m_\delta(h)$.
Since the samples are independent given $h$, all $G$ samples do so with probability $\bigl(1 - m_\delta(h)\bigr)^G$.
On that event, even the sample closest to $p_h$ has squared error at least $\delta^2$, yielding:
\begin{equation}
    \mathbb E\!\left[\min_{i \le G} (C_i - p_h)^2 \,\middle|\, h\right] \ge \delta^2 \bigl(1 - m_\delta(h)\bigr)^G.
\end{equation}
Since a fixed value $c$ has expected Brier loss $p_h(1-p_h) + (c - p_h)^2$, this bounds the excess Brier loss of the best of $G$ samples.
With $G = 8$, the group size in our RL runs, the $G$ samples contain a value within $\delta$ of $p_h$ with probability only $1 - \bigl(1 - m_\delta(h)\bigr)^8$, which remains small when $m_\delta(h) \approx 0$.
Thus, even repeated sampling from the same context rarely reaches values near $p_h$.

\subsection{Proof of Proposition~\ref{prop:confidence_target_moments}}
\label{sec:confidence_target_proof}

We analyze the unrounded targets under the rollout distribution before filtering and balancing.

\begin{proof}
Fix $x$ and denote $v_x = p_x(1 - p_x)$. Under independent rollout sampling, $\mathbb E[Q_x \mid x] = \mathbb E[Z_i \mid x] = p_x$ and $\operatorname{Var}(Q_x \mid x) = v_x / n$.
Since $Z_i$ is included in the summation of $Q_x$, their covariance is $\operatorname{Cov}(Q_x, Z_i \mid x) = v_x / n$.
By linearity of expectation:
\begin{equation}
    \mathbb E[C_i^* \mid x] = \lambda \mathbb E[Q_x \mid x] + (1 - \lambda)\mathbb E[Z_i \mid x] = p_x.
\end{equation}
Expanding the conditional variance yields:
\begin{align}
    \operatorname{Var}(C_i^* \mid x)
    &= \lambda^2 \frac{v_x}{n} + (1 - \lambda)^2 v_x + 2\lambda(1 - \lambda)\frac{v_x}{n} \\
    &= v_x \left[ \frac{1}{n} + (1 - \lambda)^2 \left(1 - \frac{1}{n}\right) \right].
\end{align}
Now fix any prediction $\hat c$ at which $\ell$ is finite. Since $\ell(\hat c, t)$ is affine in $t$, taking expectations gives:
\begin{equation}
    \mathbb E[\ell(\hat c, C_i^*) \mid x] = p_x \ell(\hat c, 1) + (1 - p_x) \ell(\hat c, 0) = \mathbb E[\ell(\hat c, Z_i) \mid x].
\end{equation}
Because $\ell$ is strictly proper, $\mathbb E[\ell(\hat c, Z_i) \mid x]$ is uniquely minimized at $\hat c = p_x$.
Therefore, the mixed target preserves $p_x$ as the unique population minimizer under any strictly proper scoring loss.
\end{proof}

\paragraph{Variance reduction and discrimination margin.}
At $\lambda = 0$, the target reduces to the binary label with maximum variance $v_x$.
At $\lambda = 1$, the target is the question success rate with minimum variance $v_x / n$, but provides zero within-question discrimination.
For any $\lambda \in (0, 1)$, the variance is strictly smaller than $v_x$.
Within any observed rollout group, because $q_x$ is shared across all responses, the target difference between a correct response $i$ ($z_i = 1$) and an incorrect response $j$ ($z_j = 0$) is exactly:
\begin{equation}
    c_i^* - c_j^* = (1 - \lambda)(z_i - z_j) = 1 - \lambda.
\end{equation}
Furthermore, taking expectation over the rollout sampling, the expected target separation between correct and incorrect responses satisfies $\mathbb E[C_i^* \mid Z_i = 1, x] - \mathbb E[C_i^* \mid Z_i = 0, x] = 1 - \lambda + \frac{\lambda}{n} \equiv \alpha$.
With $n = 50$ and $\lambda = 0.5$, the within-group margin is $0.50$ and the expected margin is $\alpha \approx 0.51$, so incorrect responses receive clearly lower targets, while the target variance drops from $v_x$ to about $0.265v_x$, a reduction of over $73\%$ relative to binary labels.

\paragraph{Conditioning on response context and shrinkage.}
Proposition~\ref{prop:confidence_target_moments} establishes question-level calibration under the rollout distribution.
When conditioned on a specific response context $h_i = \phi(x, r_i, a_i)$, let $p_{h_i} = \Pr(Z_i = 1 \mid h_i)$.
Assuming remaining rollouts are conditionally independent of $h_i$ given $x$, the expected target satisfies:
\begin{equation}
    \mathbb E[C_i^* \mid h_i] = \alpha \, p_{h_i} + (1 - \alpha) \, p_x,
    \label{eq:shrinkage_target}
\end{equation}
where $\alpha$ is the expected margin defined above.
Eq.~\eqref{eq:shrinkage_target} shows that the target shrinks the noisy single-sample correctness signal toward the question-level success rate $p_x$.
This shrinkage introduces a response-level bias of $(1-\alpha)(p_x - p_{h_i})$ in exchange for lower target variance, while keeping a margin of $1-\lambda$ between correct and incorrect responses within each group.
These theoretical results characterize the supervision signal under the rollout distribution; the empirical effects of subsequent target balancing and autoregressive fine-tuning are evaluated in Section~\ref{sec:overall_results} and Appendix~\ref{sec:ablation_studies}.

\section{Implementation Details}
\label{sec:detailed_setup}

\subsection{Training Pipelines}
\label{sec:training_pipelines}

We use Qwen3-8B as the default base model throughout the experiments.
For DeepScaleR, we remove duplicate questions
and questions with conflicting reference answers,
resulting in 38{,}915 unique questions.
We randomly partition these questions into
a held-out evaluation set of 2{,}000 questions (DeepScaleR-Eval)
and a training set of 36{,}915 questions (DeepScaleR-Train),
where the latter is used to construct both the SFT dataset for CalibSFT and the downstream RL training data.

\paragraph{SFT data construction.}
\label{sec:calibsft_data_details}
We sample $n=50$ responses per question from DeepScaleR-Train with the base model, using temperature 1.0, top-$p$ 0.95, and a maximum length of 8{,}192 tokens. This yields 1{,}845{,}750 responses in total.
Before filtering, we compute the empirical success rate $q_x$ for each question from all 50 responses.
We then discard responses that fail to follow the \texttt{<think>}/\texttt{<answer>}/\texttt{<confidence>} format or exceed the token limit without finishing.
For each retained response, we compute the confidence target $c_i^*=\lambda q_x+(1-\lambda)z_i$ with $\lambda=0.5$ and replace the original confidence value with $c_i^*$, while keeping reasoning trace and answer unchanged.
In the default probability format, each target is rounded to two decimal places within the \texttt{<confidence>} tag, producing 100 distinct target values.
To perform target-balanced sampling, we group examples by their target value and uniformly sample an equal number of responses from each group, with the number set by the smallest group.
This yields a balanced dataset containing 349 examples per target, totaling 34{,}900 examples across 11{,}539 unique questions for main experiments.

\paragraph{CalibSFT.}
We fine-tune the model on the balanced set using correctness-conditional supervision with $\beta=0.5$,
where correct responses supervise both the reasoning/answer and confidence tokens,
while incorrect responses supervise only the confidence tokens.
The optimizer is AdamW ($\beta_1=0.9$, $\beta_2=0.999$, zero weight decay).
We employ a constant learning rate of $2\times10^{-6}$ after 10 warm-up steps, with a global batch size of 256 sequences truncated to 8{,}192 tokens.
The model is trained for 180 optimization steps on 8 NVIDIA B20Z GPUs.

\paragraph{Confidence-aware RL.}
For downstream reinforcement learning, each prompt group contains 8 rollouts, with each response capped at 8{,}192 tokens.
We train the policy with a batch size of 64 prompt groups for 160 optimization steps.
For methods using DAPO dynamic sampling, which keeps only groups with mixed correctness and discards groups that are entirely correct or incorrect, this corresponds to approximately 600 generation batches.
To accelerate training for these dynamic-sampling methods, we pre-filter the training set by removing questions where all base-model rollouts are already correct.
Each RL baseline follows its designated paradigm while preserving its original reward formulation and credit-assignment mechanism:

\begin{itemize}[leftmargin=*,itemsep=2pt,topsep=2pt]

    \item \textbf{Joint RL baselines (RLCR, CoCA, DCPO, and C3RL).}
        These methods simultaneously optimize answer correctness and confidence rewards starting directly from the base checkpoint ($\text{Base} \rightarrow \text{RL}$), whereas their CalibSFT counterparts initialize from the fine-tuned prior ($\text{Base} \rightarrow \text{CalibSFT} \rightarrow \text{RL}$).
        For RLCR~\citep{damani2026beyond}, CoCA~\citep{li2026confidence}, and DCPO~\citep{ma2026decoupling}, training uses the default probability format along with DAPO dynamic sampling.
        In contrast, C3RL~\citep{yang2026scaling} integrates reference-accuracy rewards using an integer confidence format ($0$--$9$), and its CalibSFT initialization is adapted accordingly.
        To preserve C3RL's category-dependent reference signals (all-correct, partially-correct, or all-incorrect), we train both the baseline and CalibSFT runs on the full dataset without dynamic sampling.

    \item \textbf{Two-stage RL baseline (ReDoubt).}
        Following~\citet{bani2026rewarding}, ReDoubt decouples accuracy and confidence by optimizing a clipped log-score confidence objective on top of an accuracy-aligned policy ($\text{Base} \rightarrow \text{RLVR} \rightarrow \text{ReDoubt}$).
        To seamlessly integrate with this paradigm, we fine-tune the trained RLVR checkpoint with CalibSFT before running ReDoubt RL ($\text{Base} \rightarrow \text{RLVR} \rightarrow \text{CalibSFT} \rightarrow \text{ReDoubt}$).

\end{itemize}

\input{tables/format_prompts.tex}

\subsection{Evaluation Protocol}
For our primary evaluations, we decode at temperature 0.6 and sample 32 responses per question for the AIME benchmarks and 4 responses per question for all other benchmarks.
Pass@1 is reported as the average answer correctness across these sampled outputs.
To evaluate calibration, the predicted confidence value is parsed from the \texttt{<confidence>} tag and normalized to $[0, 1]$, with ECE calculated using 10 equal-width bins.
Unless stated otherwise, all RL evaluations report the final checkpoint.

\paragraph{Answer verification.}
The verification rules yield the binary correctness labels $z_i$ that construct the CalibSFT targets and provide verifiable rewards during downstream RL.
Depending on the benchmark protocol, we use three types of verification:
\begin{itemize}[leftmargin=*,itemsep=2pt,topsep=2pt]
   \item \textbf{Symbolic and normalized matching:} Mathematical benchmarks and HotpotQA verify correctness via symbolic equivalence under \texttt{math\_verify} or exact text match after standard normalization (lowercasing, removing punctuation/articles, and collapsing whitespace). NQOpen and MuSiQue apply the same normalization against any annotated gold alias.
   \item \textbf{Official benchmark protocols:} For datasets with established grading rules, we strictly follow their official scripts. WebQuestions requires an exact set match, DROP applies its official normalization for exact string matching, PopQA accepts predictions containing a gold alias, and LiveBenchReasoning employs its native rule-based graders for reasoning subtasks.
    \item \textbf{LLM-as-a-judge:} Since TriviaQA allows open-ended surface variations, we employ Qwen3-8B as an LLM judge. It takes the question, reference answer, and model prediction as input to render a binary correctness decision.
\end{itemize}

\subsection{Confidence Formats}
\label{sec:format_details}
To evaluate format generality in Section~\ref{sec:generality}, we keep the training data fixed and only alter how the confidence target $c^*$ is expressed.
The corresponding prompts for each format are provided in Table~\ref{tab:format_prompts}.

\begin{itemize}[leftmargin=*,itemsep=2pt,topsep=2pt]
    \item \textbf{Probability:} Expressed as a two-decimal float. The fine grid with step 0.01 is our default format, while the coarse grid with step 0.05 is evaluated as an ablation.

    \item \textbf{Integer:} Mapped to a single digit $\lfloor 9c^* + 0.5 \rfloor \in \{0, \ldots, 9\}$, and normalized by dividing by 9 during evaluation.

    \item \textbf{Linguistic:} Categorized into \emph{low} ($c^* < 1/3$), \emph{medium} ($1/3 \le c^* < 2/3$), and \emph{high} ($c^* \ge 2/3$), which are evaluated at fixed values of 0.25, 0.50, and 0.75, respectively.
\end{itemize}

\section{Detailed Experimental Results}

\begin{figure}[p]
    \centering
    \includegraphics[width=0.85\linewidth]{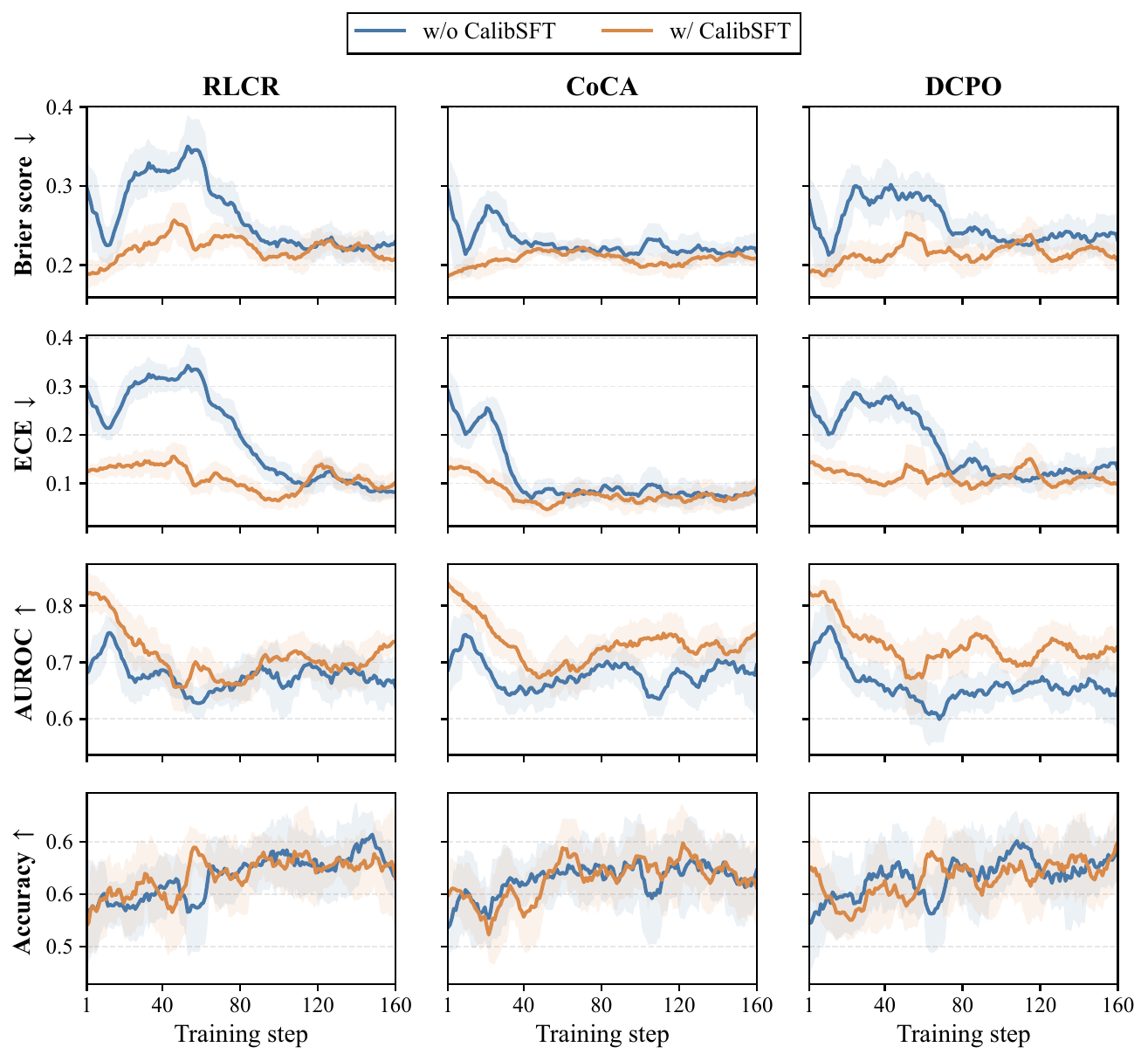}
    \caption{Training dynamics of RLCR, CoCA, and DCPO with and without CalibSFT initialization. Standard RL improves Brier score and ECE but degrades AUROC over training. In contrast, CalibSFT consistently achieves markedly better Brier score, ECE, and AUROC while preserving task accuracy.}
    \label{fig:training_dynamics_calibration}
    \vspace{6pt}
    \includegraphics[width=0.85\linewidth]{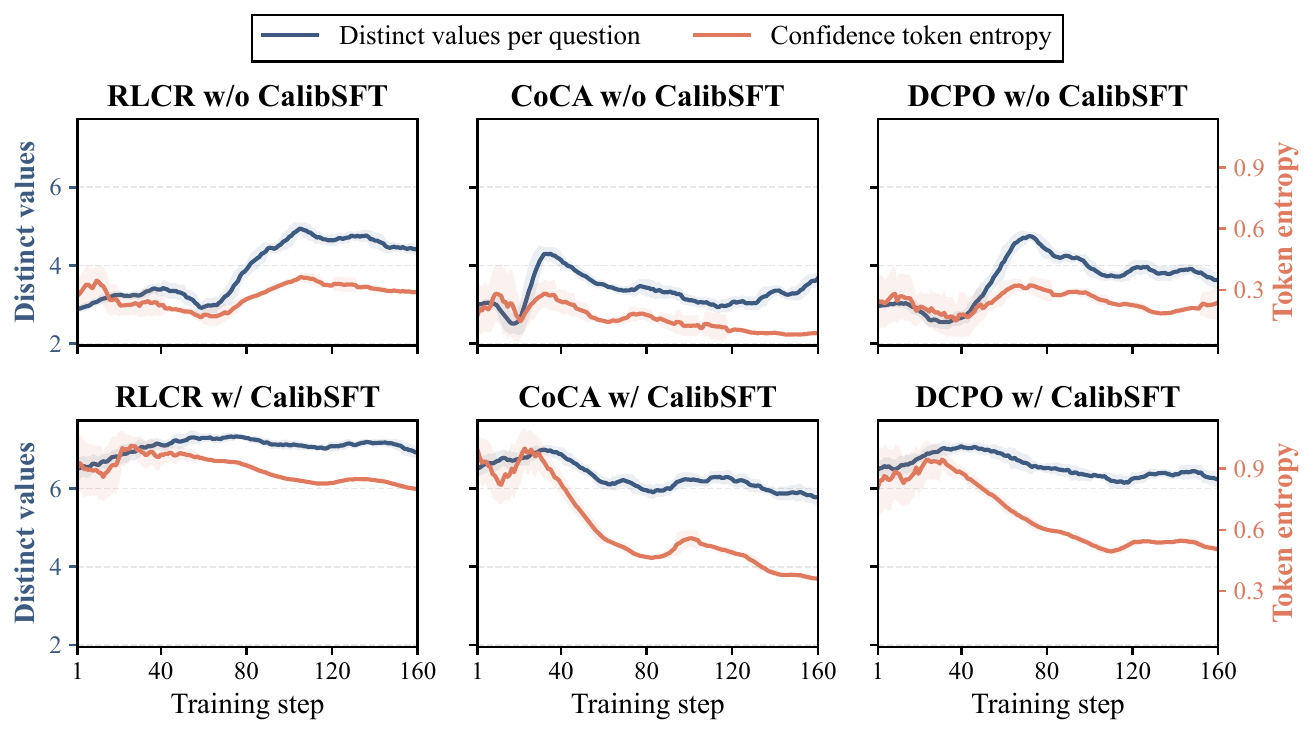}
    \caption{Confidence exploration dynamics during training across methods. The number of distinct confidence values sampled per question (left axis) closely tracks confidence token entropy (right axis).  In standard RL (top), confidence token entropy remains low, restricting the number of distinct confidence values sampled per question. In contrast, CalibSFT initialization (bottom) substantially elevates token entropy throughout policy optimization, doubling distinct sampled values and enabling active exploration across the confidence spectrum.}
    \label{fig:training_dynamics_support}
\end{figure}

\subsection{Training Dynamics}
\label{sec:training_dynamics}

In Section~\ref{sec:motivation}, we observed that confidence remains concentrated on a few high values when training RLCR with its calibration reward.
To examine whether this limitation is universal, we further train Qwen3-8B with CoCA~\citep{li2026confidence} and DCPO~\citep{ma2026decoupling}, tracking their training trajectories with and without CalibSFT initialization.

\paragraph{CalibSFT mitigates the calibration--discrimination trade-off.}
As illustrated in Figure~\ref{fig:training_dynamics_calibration}, this calibration-discrimination tension is indeed a shared failure mode across standard confidence-aware RL methods.
Across all three baselines, optimizing the calibration reward successfully lowers calibration error (e.g., ECE drops from 23.8--25.8\% to 7.8--14.2\%), yet AUROC steadily degrades (falling from 70.8--73.5\% to 64.1--68.0\%).
Consequently, the policy merely minimizes the aggregate calibration penalty without learning to distinguish correct from incorrect answers.
In contrast, initializing policy optimization with CalibSFT sustains lower Brier score and superior AUROC throughout training while preserving task accuracy.

\paragraph{CalibSFT broadens confidence exploration during RL.}
Figure~\ref{fig:training_dynamics_support} tracks the corresponding
confidence exploration.
Without CalibSFT, confidence-token entropy remains low, and each
question produces fewer than 5.3 distinct confidence values on
average across the eight rollouts.
With so few distinct values, incorrect responses rarely receive low
confidence during rollouts, leaving the policy gradient little signal to
lower their confidence.
CalibSFT increases token entropy by more than threefold and roughly
doubles the number of distinct confidence values sampled per
question throughout training.
The resulting broader support is consistent with the improved
discrimination shown in Figure~\ref{fig:training_dynamics_calibration}.

\input{tables/per_dataset_results.tex}

\subsection{Per-Dataset Main Results}
\label{sec:per_dataset_results}
Tables~\ref{tab:appendix_id_results} and~\ref{tab:appendix_ood_results}
present the comprehensive results across all 16 individual benchmarks,
complementing the split-level averages in Table~\ref{tab:main_results}.
The overall average across the 8 datasets within each split is summarized in the final subtable.

\paragraph{CalibSFT improves calibration on difficult benchmarks.}
Across individual tasks, CalibSFT initialization lowers ECE in 61 of 80 method--dataset pairs,
with the most pronounced improvements where baseline RL models suffer from severe overconfidence.
Specifically, on the six benchmarks where baseline RL methods
exhibit an ECE exceeding 25\% (MinervaMath, MuSiQue, LiveBenchReasoning, NQOpen, PopQA, and WebQuestions),
initializing with CalibSFT consistently reduces both Brier score and ECE across all five RL algorithms.
On high-accuracy tasks such as DROP, baseline RL models achieve deceptively low ECE (e.g., 2.30\% for DCPO) because their confidence clusters around the task accuracy, yet their discrimination remains weak (AUROC $\approx 59\%$).
CalibSFT breaks this uninformative concentration and substantially improves discrimination (e.g., DCPO's AUROC rises from 59.34\% to 75.92\%).
While RLVR improves Pass@1, it leaves calibration largely unaddressed and even worsens OOD Brier score and ECE relative to the base model.
In contrast, CalibSFT alone substantially lowers Brier score and ECE across both splits, effectively shifting the concentrated confidence prior toward a well-calibrated scale.
Although CalibSFT alone exhibits limited OOD discrimination, subsequent confidence-aware RL effectively recovers this ranking capability, lifting OOD AUROC from 61.20\% to over 71\% for four of the five methods.
Ultimately, CalibSFT establishes a better-scaled confidence prior, while confidence-aware RL improves discrimination outside the training domain.

\subsection{Robustness across Random Seeds}
\label{sec:seed_stability}

The main results report single-run evaluations per method.
To confirm that the improvements from CalibSFT consistently exceed training variance,
we repeat RLCR and DCPO on Qwen3-8B across three random seeds (43, 44, and 45) under the protocol of Table~\ref{tab:main_results}.

\paragraph{CalibSFT improvements are stable across random seeds.}
As shown in Table~\ref{tab:seed_stability}, training variance across runs is minimal.
Across all evaluated seeds, CalibSFT consistently outperforms RL baselines on all calibration and discrimination metrics
across both in-distribution and out-of-distribution benchmarks.
Meanwhile, task accuracy (Pass@1) remains comparable to the baseline.
These results confirm that the improvements brought by CalibSFT are stable.

\input{tables/seed_stability.tex}

\subsection{Evaluation on Additional Metrics}
\label{sec:additional_metrics}

While Table~\ref{tab:main_results} evaluates core accuracy,
calibration, and discrimination,
we further assess the policies along two complementary dimensions:
fine-grained calibration and confidence diversity.
All metrics are computed on the same results from the main evaluation.

\paragraph{Fine-grained calibration.}
To thoroughly evaluate confidence reliability,
we consider 4 additional metrics that evaluate question-level calibration, failure prediction, and calibration ranking:
\begin{itemize}
    \item \textbf{Positive Calibration Error (PCE)}~\citep{ma2026decoupling} focuses on overconfidence by restricting ECE to bins where average confidence is higher than accuracy.

    \item \textbf{Capability Brier score (Cap.\ Brier)}~\citep{yang2026calibration} measures how well a question's mean confidence matches its empirical rollout success rate, tracking calibration against problem difficulty.

    \item \textbf{Excess Area Under the Risk--Coverage Curve (E-AURC)}~\citep{geifman2019bias} evaluates failure prediction by comparing the model's risk--coverage curve to a perfect ranking at the same accuracy.

    \item \textbf{Rank Calibration Error (RCE)}~\citep{huang2024uncertainty} measures whether confidence increases monotonically with expected correctness across equal-mass bins, which is independent of model accuracy and score scales.
\end{itemize}

\input{tables/additional_metrics.tex}

\paragraph{CalibSFT improves confidence quality beyond standard metrics.}
As shown in Table~\ref{tab:additional_metrics}, CalibSFT initialization greatly outperforms standard RL baselines across these metrics.
In particular, the large drop in RCE indicates that higher confidence more reliably reflects higher accuracy.
Meanwhile, the sharp decrease in out-of-distribution PCE shows that the method effectively lowers overconfident predictions on unfamiliar problems, without merely shifting all confidence scores down.

\paragraph{Confidence diversity.}
To examine whether models explore a broader range of confidence values, we evaluate 4 diversity metrics:
\begin{itemize}
    \item \textbf{Support Size} is the number of unique confidence values generated across all responses in a split, directly measuring the size of the confidence support.

    \item \textbf{Top-1} and \textbf{Top-3} are the percentage of responses falling on the single and three most frequent confidence values, tracking how heavily the distribution concentrates on a few points.

    \item \textbf{Effective Support} is the exponential of the confidence entropy ($\exp(H)$), which measures the effective number of values used by the model. Unlike raw support size, this metric is not inflated by rare values that appear only once or twice.
\end{itemize}

\input{tables/diversity_metrics.tex}

\paragraph{CalibSFT broadens the confidence distribution.}
As shown in Table~\ref{tab:diversity_metrics}, CalibSFT universally expands confidence support across all five RL frameworks on both evaluation splits.
In the baseline RL models, predictions collapse onto a tiny set of values.
For instance, the top three values often account for $70\%$ to $90\%$ of all responses, yielding an effective support of fewer than 7 values.
CalibSFT breaks this concentration: Top-1 and Top-3 shares drop sharply, while both Support Size and Effective Support increase by several times (for example, Effective Support rises from $7.03$ to $72.83$ for RLCR out-of-distribution).
Because both Support Size and Effective Support increase together, the model uses many different confidence values regularly, rather than generating a few rare numbers by chance.
Finally, this broader support is consistent with the improved calibration observed in Tables~\ref{tab:additional_metrics} and~\ref{tab:confidence_correlation}.

\subsection{Ablation Studies}
\label{sec:ablation_studies}

\subsubsection{Confidence Target Construction}
\label{sec:target_construction_setup}

\input{tables/training_target_distribution.tex}

The construction of confidence targets in CalibSFT is inspired by proper scoring rules, which serve as a foundation for calibration.
To verify the effectiveness of this target design, we compare CalibSFT
with targets based on group-level correctness (CSFT)~\citep{jang2025verbalized},
token probabilities (UaIT)~\citep{liu2024uncertainty}, semantic consistency
(SaySelf)~\citep{xu2024sayself}, and rubric-based LLM judgments
(LoVeC)~\citep{zhang2026lovec}, using a shared SFT training set and the same
optimization objective.

For a fair comparison, we construct the training data from a shared response
pool sampled uniformly across empirical success rates so that the comparison
covers both easier and harder questions. For each question in
DeepScaleR-Train, we use 10 responses generated by Qwen3-8B, with $z_i\in\{0,1\}$
denoting correctness and $q_x=\frac{1}{10}\sum_{i=1}^{10}z_i$ the empirical
success rate. We select 300 questions at each $q_x\in\{0.1,0.2,\ldots,0.9\}$
and retain one correct and one incorrect response per question, yielding
5{,}400 responses from 2{,}700 questions with balanced correctness at each
success rate. SaySelf uses only its correct responses following its SFT setup.
On this shared subset, we construct the five confidence targets as follows:

\begin{itemize}
    \item \textbf{CSFT.} Both responses receive the group success rate, $c_i^*=q_x$.

    \item \textbf{UaIT.} $c_i^*=\exp(-\mathrm{NLL}_i)$, where $\mathrm{NLL}_i$ is the mean negative log-likelihood of the sampled response tokens under the generating model.

    \item \textbf{SaySelf.} We embed the ten responses to each question with Qwen3-Embedding-8B and cluster them at a cosine similarity threshold of 0.98. Each retained correct response is assigned the frequency of its cluster, $c_i^*=|\mathcal M_i|/10$, where $\mathcal M_i$ is the cluster containing response $i$.

    \item \textbf{LoVeC.} An LLM judge scores each response on a 0--10 rubric (10: correct with sound reasoning; 8--9: correct with minor gaps; 5--7: partially correct or ambiguous; 2--4: meaningful but incorrect; 0--1: invalid or clearly wrong), normalized as $c_i^*=s_i/10$.

    \item \textbf{CalibSFT.} Following our main design, the target combines group success rate with response correctness, $c_i^*=0.5q_x+0.5z_i$.
\end{itemize}

All variants are initialized from Qwen3-8B and use the same supervision objective:
reasoning and answer tokens are supervised on correct responses, while
confidence tokens are supervised on all responses. We train for 80 steps
(40 for SaySelf) with a batch size of 256 and a learning rate of
$2\times10^{-6}$. Table~\ref{tab:training_target_distribution} reports the
proportion of training targets in each of ten equal-width confidence
intervals across the five methods.

\input{tables/target_construction_lambda.tex}

\noindent\textbf{Proper-scoring targets preserve both calibration and discrimination.}
Table~\ref{tab:target_construction} reports the AUROC and Brier score of the constructed targets before SFT (Pre-SFT targets), alongside the resulting models' evaluation performance on in-distribution mathematical benchmarks after SFT (Post-SFT predictions).
In the pre-SFT phase, the metrics evaluate the theoretical properties and supervision quality of the training targets themselves.
Prior methods struggle to balance calibration and discrimination during target construction.
CSFT estimates question-level accuracy $q_x$ from verification signals but assigns the same target to all responses of a question; since each question contributes one correct and one incorrect response, its targets yield an AUROC of 50.00 and a high Brier score of 31.67.
Heuristic targets derived from token probabilities (UaIT), semantic consistency (SaySelf), or LLM judgments (LoVeC) lack grounding in strictly proper scoring rules, which leads to miscalibrated targets and high Brier scores.
In contrast, by combining group success rates with individual response correctness $z_i$, CalibSFT separates correct from incorrect responses by construction at $\lambda=0.5$ (an AUROC of 100.00), and its Brier score equals $\lambda^2$ times that of question-level targets (7.92 vs.\ 31.67).

\noindent\textbf{Target-level advantages translate to superior post-SFT performance.}
These advantages at target construction transfer to the post-SFT models.
Averaged over eight mathematical benchmarks, the policy fine-tuned with CalibSFT achieves the strongest performance on both metrics.
Specifically, CalibSFT obtains the highest AUROC of 79.68 and the lowest Brier score of 17.74, outperforming models trained on alternative targets.
These results confirm that grounding target construction in proper scoring rules yields supervision that improves both calibration and discrimination of the trained model.

\subsubsection{Target-Balanced Sampling}

Our CalibSFT pipeline balances training responses across confidence targets. This step prevents frequent targets (especially $0$ and $1$) from dominating the loss and pushing predicted confidences toward the endpoints. To verify this effect, we compare our balanced model against a baseline trained on the raw, unbalanced responses from the same $n=50$ candidate pool, evaluating both on DeepScaleR-Eval.

\paragraph{Target balancing mitigates concentration at extremes.}
As shown in Figure~\ref{fig:balance_ablation_confidence} and Table~\ref{tab:balance_ablation}, balancing has little impact on standard calibration error, but substantially improves confidence diversity. Specifically, it drops the mass at the extremes from $75.56\%$ to $14.49\%$ and expands the effective support from $3.33$ to $29.86$. This broader distribution is critical for downstream RL: instead of collapsing into near-binary actions ($0$ or $1$), the policy gains access to a continuous, fine-grained action space for optimization, consistent with our diversity analysis in Table~\ref{tab:diversity_metrics} and format comparisons in Table~\ref{tab:format_calibsft}.

\input{tables/balance_ablation.tex}

\subsubsection{Target Mixing Weight}
We construct confidence targets as
$c_i^*=\lambda q_x+(1-\lambda)z_i$, where $q_x$ is the group
success rate and $z_i$ is the correctness of an individual response.
To evaluate the effect of the mixing weight, we compare
$\lambda\in\{0,0.25,0.5,0.75,1\}$ using the same subset as above
and keeping the SFT settings fixed.
At $\lambda=0$, targets are restricted to 0 and 1; this setting yields
the highest Brier score in Table~\ref{tab:lambda_ablation}.
At $\lambda=1$, correct and incorrect responses to the same question
receive identical targets, providing no within-question discrimination
signal; post-SFT AUROC is also lower than at $\lambda=0.5$.
For $\lambda>0.5$, a correct response to a hard question can receive a lower target than an incorrect response to an easy question, since $\lambda q_h+(1-\lambda)<\lambda q_e$ whenever $q_e-q_h>(1-\lambda)/\lambda$; post-SFT AUROC accordingly drops from 79.68 at $\lambda=0.5$ to 75.93 at $\lambda=0.75$.
In contrast, any $\lambda\le 0.5$ keeps every correct target above every incorrect one, because $\lambda(q_e-q_h)<\lambda\le 1-\lambda$.
Equal weighting is thus the largest weight on the success rate that preserves this ordering, and it achieves the highest AUROC and lowest Brier score on ID benchmarks among the tested settings, so we use $\lambda=0.5$ as the default.

\input{tables/supervision_ablation.tex}

\subsubsection{Correctness-Conditional Supervision}
To verify the effectiveness of the CalibSFT design, we ablate three choices in
the SFT loss: restricting reasoning and answer supervision to correct
responses, supervising confidence on all responses, and balancing the
segment losses. Table~\ref{tab:supervision_ablation} reports the results.

\noindent\textbf{Correctness-conditional supervision improves calibration and discrimination.}
Confidence-only supervision leaves Pass@1 far behind the other variants,
showing that reasoning and answer supervision is necessary for accuracy.
Without segment balancing, calibration is markedly worse: Full-CE trails
Full-Balanced on both AUROC and ECE. Restricting
reasoning and answer supervision to correct responses slightly raises
accuracy, consistent with incorrect reasoning pulling the model toward
incorrect solutions, but restricting confidence supervision to correct
responses as well narrows its coverage and weakens calibration.
Supervising confidence on all responses while keeping reasoning and
answer supervision correct-only recovers this coverage: CalibSFT
achieves the best AUROC, Brier score, and ECE among all variants, with
Pass@1 close to the best. This combination best balances accuracy,
calibration, and confidence diversity, giving a well-calibrated starting point for downstream
confidence-aware RL.

\subsection{Integration with Other Confidence Estimators}
\label{sec:confidence_estimators}
While this work primarily studies verbalized confidence,
confidence can also be derived from internal token probabilities or sample consistency.
To evaluate whether the benefits of CalibSFT generalize across different confidence estimation paradigms,
we examine two widely adopted alternatives using the exact response samples and evaluation protocol from the main experiment (Table~\ref{tab:main_results}):

\begin{itemize}
    \item \textbf{Token probability.} We extract model confidence directly from the output logits over confidence tokens.
    Specifically, following the prefix \texttt{<confidence>0.}, we apply a restricted softmax over the ten digit tokens $\{0,\ldots,9\}$ to obtain normalized probabilities $P(d)$, computing the expected confidence as:
        \begin{equation*}
            c = \sum_{d=0}^{9} P(d)\left(\frac{d}{10} + 0.05\right),
        \end{equation*}
        where each bin is represented by its midpoint.

    \item \textbf{Self-consistency.} Following majority voting~\citep{wang2023selfconsistency},
    we aggregate multiple rollouts per question (4 by default; 32 on AIME) by answer equivalence.
    The dominant consensus answer determines final correctness, and its corresponding confidence is defined as the mean verbalized score across all rollouts in this consensus group.

\end{itemize}

\paragraph{CalibSFT transfers across confidence estimators.}
Table~\ref{tab:confidence_estimators} compares these estimators alongside verbalized confidence.
Crucially, CalibSFT initialization consistently improves all metrics across every estimator on both in-distribution and out-of-distribution benchmarks.
For instance, under token probability, CalibSFT boosts in-distribution AUROC from 83.60 to 87.66 while cutting out-of-distribution Brier score from 31.22 to 22.80.
Under self-consistency, it reduces ECE substantially on both splits (from 17.62 to 9.67 ID and from 30.53 to 18.19 OOD).

\input{tables/confidence_estimators.tex}

\subsection{Confidence Visualization}
\label{sec:confidence_visualization}

To intuitively illustrate how CalibSFT alleviates calibration errors across tasks of varying difficulty,
we examine the reliability diagrams of RLCR with and without CalibSFT initialization.
Specifically, we evaluate on three in-distribution mathematical benchmarks of distinct difficulty
(MATH-500, AIME 2026, and MinervaMath; Figure~\ref{fig:reliability_id})
and three out-of-distribution reasoning benchmarks (LiveBenchReasoning, TriviaQA, and NQOpen; Figure~\ref{fig:reliability_ood})
across ten equal-width confidence intervals.

\paragraph{CalibSFT improves calibration across task difficulties.}
As shown in Figures~\ref{fig:reliability_id} and~\ref{fig:reliability_ood},
CalibSFT effectively mitigates both overconfidence and underconfidence across tasks with different difficulty levels.
On harder benchmarks where standard RLCR is heavily overconfident, its average confidence exceeds empirical accuracy by 23.0 points on
MinervaMath, 52.4 points on NQOpen, and 29.1 points on LiveBenchReasoning.
Initializing with CalibSFT reduces the gap between confidence and empirical accuracy,
lowering ECE from 23.05\% to 12.00\% on MinervaMath, from 52.43\% to 30.89\% on NQOpen, and from 29.09\% to 7.75\% on LiveBenchReasoning.
Meanwhile, on easier datasets such as MATH-500 where RLCR exhibits underconfidence,
CalibSFT properly raises the confidence scores and reduces ECE from 7.53\% to 3.89\%.

\begin{figure}[!tp]
    \centering
    \includegraphics[width=\linewidth]{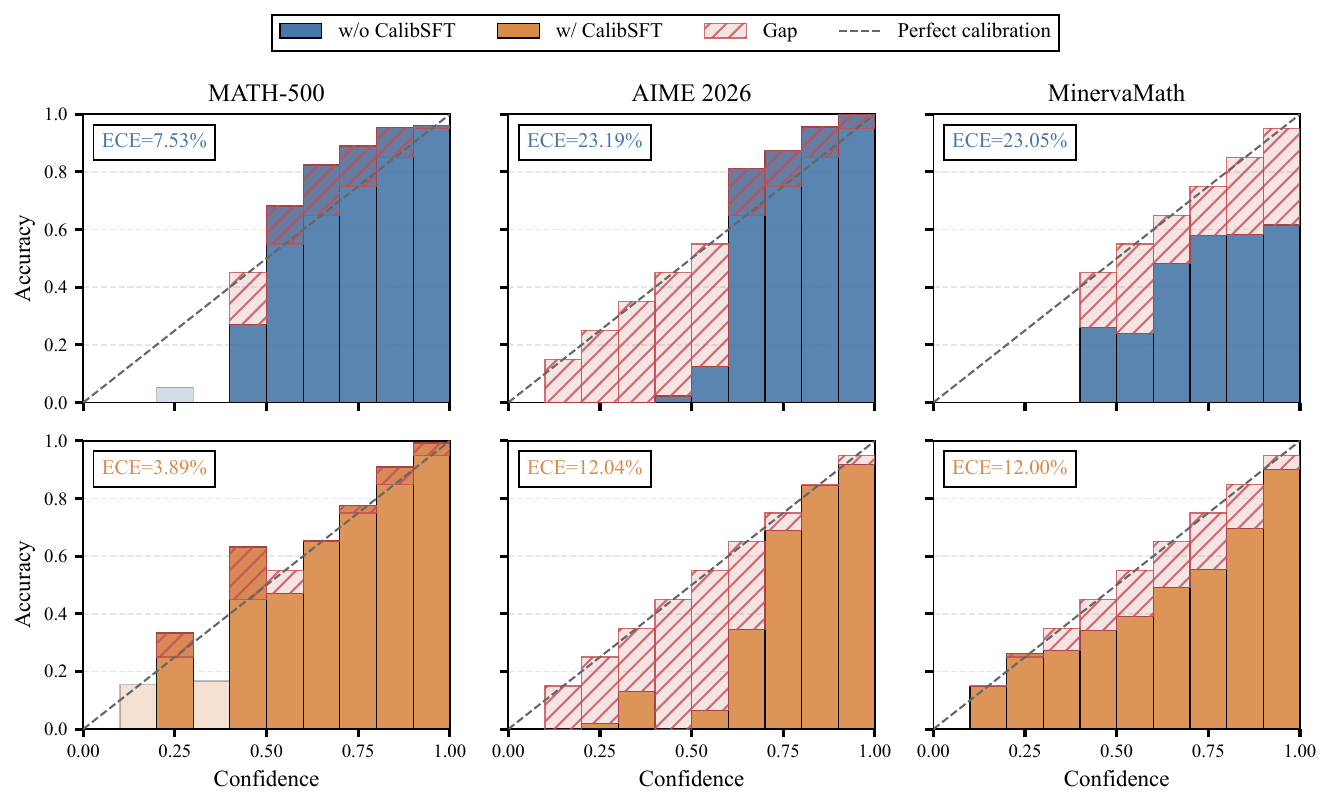}
    \caption{Reliability diagrams of RLCR with and without CalibSFT across three in-distribution mathematical benchmarks.
    CalibSFT effectively reduces both underconfidence on easier tasks (MATH-500)
    and overconfidence on harder tasks (MinervaMath and AIME 2026), aligning confidence closer to actual accuracy.}
    \label{fig:reliability_id}
    \vspace{8pt}
    \includegraphics[width=\linewidth]{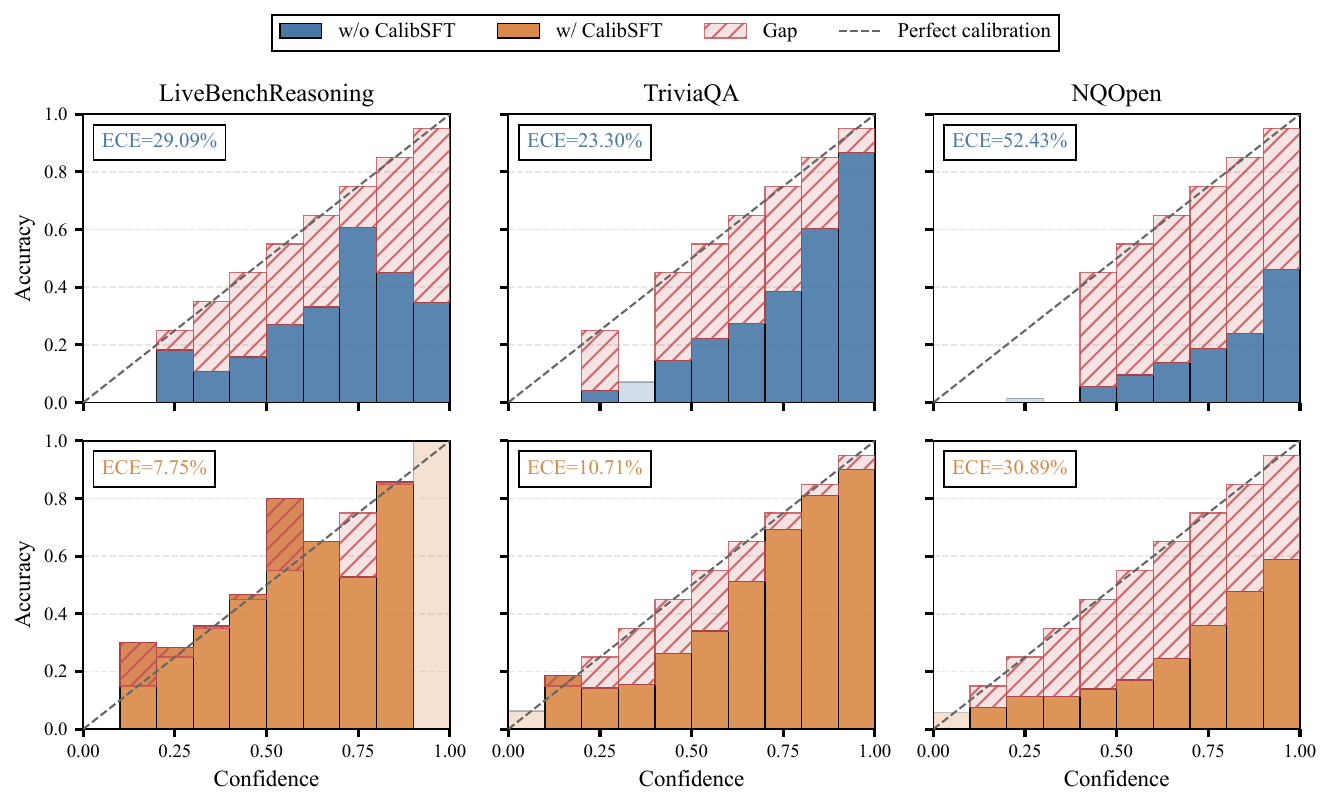}
    \caption{Reliability diagrams of RLCR with and without CalibSFT on out-of-distribution reasoning benchmarks.
    CalibSFT substantially alleviates severe overconfidence across all three tasks and
    prevents confidence from collapsing onto a few extreme values.}
    \label{fig:reliability_ood}
\end{figure}

\subsection{Running Time Analysis}
\label{sec:training_cost}
To evaluate the computational efficiency of CalibSFT,
we compare its training overhead against standard confidence-aware RL.
CalibSFT requires no manual annotation, as its targets are derived purely from self-sampled rollouts
verified by the same automated verifier used in downstream RL.
Crucially, these rollout responses are collected only once before fine-tuning,
whereas RL needs to sample new responses at every optimization step.

CalibSFT relies on a one-time offline rollout stage prior to policy fine-tuning.
For $n=50$, we generate 50 responses per training question, requiring approximately 147 GPU-hours in total (roughly 16.4 wall-clock hours on an 8$\times$ NVIDIA B20Z node).
For a given base model and prompt format, this curated dataset can be reused across all downstream confidence-aware RL methods, avoiding repeated rollout collection.
Reducing the rollout budget to $n=10$ lowers the collection cost to approximately 32 GPU-hours (around 4 wall-clock hours), providing a compute-efficient alternative.
This setting yields probability targets on the grid
$\{0.00,0.05,\ldots,1.00\}$ and remains effective: the corresponding
CalibSFT model achieves an AUROC of 79.68 and a Brier score of 17.74 in
the target-construction experiment (Table~\ref{tab:target_construction}).

\paragraph{CalibSFT requires little fine-tuning time.}
As shown in Table~\ref{tab:training_cost},
CalibSFT introduces minimal overhead compared to downstream RL.
On a single node with 8 NVIDIA B20Z GPUs, each CalibSFT step takes only 0.33 minutes, compared to 4.77 minutes per step for RLCR.
Consequently, the CalibSFT fine-tuning stage accounts for merely 6.9\% of the GPU-hours required by RLCR. The dominant cost is the one-time rollout collection, which is shared across all downstream confidence-aware RL methods.

\input{tables/training_cost.tex}

\section{Downstream Applications}

Beyond standard calibration and discrimination metrics,
we evaluate whether CalibSFT's gains
transfer into practical benefits for downstream decision-making under finite-sample statistical guarantees.
We investigate two deployment scenarios:
(1) \textbf{selective risk control} (\S\ref{sec:selective_risk_control}),
which maximizes response coverage while strictly controlling the selective risk (error rate)
among accepted answers, and
(2) \textbf{confidence-guided model routing} (\S\ref{sec:model_routing}),
which minimizes calls to a stronger and more expensive model
while provably bounding accuracy loss relative to a full-deferral baseline.

\input{tables/risk_control.tex}

\begin{figure}[t]
    \centering
    \includegraphics[width=\linewidth]{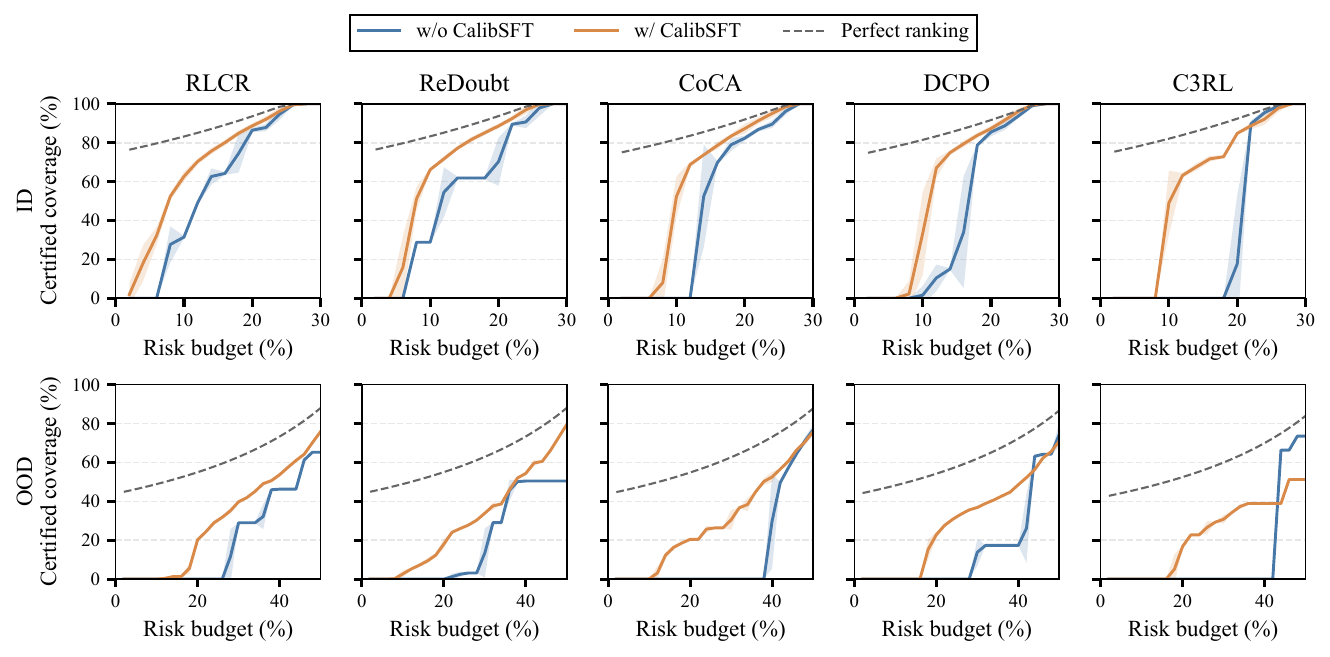}
    \caption{Certified coverage across risk budgets on ID (top) and OOD (bottom) benchmarks.
    The results are averaged over ten random splits.
    Initializing with CalibSFT consistently yields higher certified coverage across budgets and tracks the perfect-ranking
    reference curve more closely, particularly under strict risk tolerances.}
    \label{fig:risk_control}
\end{figure}

\subsection{Selective Risk Control}
\label{sec:selective_risk_control}

Selective prediction allows a reasoning model to answer a question only when its confidence is high enough, and abstain otherwise to avoid incorrect answers.
Specifically, given a threshold $\tau$, the model outputs an answer only if its confidence satisfies $c \ge \tau$.
Under this setting, \emph{coverage} is the fraction of answered questions, while \emph{selective risk} is the error rate among them.
The goal is to maximize coverage while keeping selective risk below a given budget $r_{\max}$~\citep{huang2025model}.
We test whether CalibSFT yields higher certified coverage under the same risk budget.

To select a valid threshold with finite-sample guarantees, we adopt the Learn-then-Test framework~\citep{angelopoulos2024learn} following \citet{shen2026provable}.
Specifically, over a grid of 101 candidate thresholds in $\{0, 0.01, \ldots, 1\}$, we compute a one-sided Clopper--Pearson upper bound on selective risk~\citep{clopper1934use} at confidence level $1-\delta/101$, with $\delta=0.05$.
By a union bound, these risk bounds hold simultaneously across all candidates with probability at least $1-\delta$.
We then choose the smallest threshold whose upper bound is at most $r_{\max}$, maximizing coverage; if no candidate qualifies, the model abstains on all questions.
For both ID and OOD benchmarks, we randomly split the questions into equal calibration and test sets ($50\%/50\%$).
We use one response per calibration question to determine the threshold, and evaluate coverage and selective risk on all responses in the test set.
We repeat this procedure over ten random splits and report the mean and standard deviation of coverage.

\paragraph{CalibSFT increases coverage under a fixed error-rate guarantee.}
Table~\ref{tab:risk_control} shows that CalibSFT initialization substantially boosts certified coverage across all evaluated RL algorithms under identical risk budgets.
On ID benchmarks at a $10\%$ risk budget, CalibSFT raises RLCR's coverage from $31.4\%$ to $62.6\%$, and enables CoCA and C3RL to reach $52.3\%$ and $49.2\%$ coverage, respectively, compared with zero coverage for both baselines.
On OOD benchmarks at a $20\%$ risk budget, all five baselines yield zero certified coverage, whereas CalibSFT achieves coverage ranging from $16.8\%$ to $22.8\%$.
Figure~\ref{fig:risk_control} further illustrates that CalibSFT consistently brings certified coverage closer to the perfect-ranking reference across risk budgets, especially under stricter risk constraints.
These results indicate that mitigating confidence concentration is important for risk-controlled deployment, allowing the model to certify more responses rather than conservatively abstaining.

\input{tables/cascade.tex}

\subsection{Confidence-Guided Model Routing}
\label{sec:model_routing}

Confidence-guided model routing uses a local model's confidence to
decide whether to retain its answer or defer the question to a stronger,
more expensive model. The local model first generates an answer and a
confidence score for every question. Its answer is retained when the
confidence is at least a threshold $\tau$; otherwise, the question is
sent to Qwen3.7-Flash. Following work on efficient reasoning and risk-aware
routing~\citep{yu2026anytime,zeng2026on,hao2026racer}, we aim to
reduce reliance on the stronger model while keeping the accuracy loss
relative to deferring all questions below a tolerance $\varepsilon$.
We measure cost using the fraction of questions deferred and the
percentage of Qwen3.7-Flash tokens saved relative to this baseline.

To certify the routing threshold $\tau$,
we bound the probability that a question is answered locally and incorrectly while the stronger model would answer it correctly,
which limits the accuracy loss relative to deferring all questions to the stronger model.
Following \S\ref{sec:selective_risk_control}, we compute one-sided Clopper--Pearson upper bounds~\citep{clopper1934use} with a union-bound correction across 102 candidate thresholds at significance level $\delta=0.05$, selecting the smallest threshold whose upper bound does not exceed $\varepsilon$.
In our experiments, the local Qwen3-8B is instantiated by each RL policy, while the stronger model is Qwen3.7-Flash.
We evaluate on 4{,}828 questions from eight ID mathematics benchmarks using one rollout per question from each model, randomly splitting the questions into equal calibration and test sets across ten independent trials to report the mean results along with the standard deviation of the deferral rate.

\paragraph{CalibSFT improves routing efficiency under an accuracy-loss guarantee.}
Table~\ref{tab:cascade} shows that CalibSFT initialization consistently lowers deferral rates across all evaluated RL methods under both accuracy-loss tolerances.
Across these baselines, CalibSFT achieves an average deferral reduction of $12.3$ and $27.7$ percentage points at $\varepsilon=1\%$ and $\varepsilon=2\%$, respectively.
In particular, at $\varepsilon=2\%$, CalibSFT cuts DCPO's deferral rate from $82.9\%$ to $33.3\%$, thereby boosting token savings from $5.6\%$ to $43.5\%$.
Across both tolerances, routing guided by CalibSFT maintains overall accuracies between $82.42\%$ and $82.81\%$, which closely match the full-deferral baseline ($82.58\%$).
By lowering confidence on incorrect responses, CalibSFT enables the local model to reliably retain valid solutions, substantially reducing computational overhead in cascaded reasoning without sacrificing accuracy.

\end{document}

%% file: math_commands.tex
\usepackage{amsmath,amsfonts,bm}

\def\eqref#1{equation~\ref{#1}}

\def\1{\bm{1}}

\DeclareMathAlphabet{\mathsfit}{\encodingdefault}{\sfdefault}{m}{sl}
\SetMathAlphabet{\mathsfit}{bold}{\encodingdefault}{\sfdefault}{bx}{n}



%% file: tables/main_results.tex
\begin{table*}[t]
    \centering
    \caption{Performance comparison on in-distribution and out-of-distribution benchmarks. Each entry
    averages the datasets within the corresponding split. CalibSFT
    initialization improves calibration and discrimination for all RL baselines while also
    raising Pass@1.}
    \label{tab:main_results}
    \resizebox{0.98\textwidth}{!}{%
    \renewcommand\arraystretch{1.0}
    \begin{tabular}{lcccccccc}
        \toprule
        & \multicolumn{4}{c}{In-distribution} & \multicolumn{4}{c}{Out-of-distribution} \\
        \cmidrule(lr){2-5} \cmidrule(lr){6-9}
        Method & Pass@1 $\uparrow$ & AUROC $\uparrow$ & Brier $\downarrow$
        & ECE $\downarrow$
        & Pass@1 $\uparrow$ & AUROC $\uparrow$ & Brier $\downarrow$
        & ECE $\downarrow$ \\
        \midrule
        Base & 59.32 & 71.90 & 33.20 & 34.36 & 45.36 & \textbf{65.71} & 45.15 & 46.94 \\
        RLVR & \textbf{64.16} & 73.16 & 31.10 & 31.52 & 43.88 & 64.19 & 50.29 & 51.52 \\
        \rowcolor{calibrow} CalibSFT & 58.51 & \textbf{79.93} & \textbf{17.30} & \textbf{14.23} & \textbf{46.37} & 61.20 & \textbf{29.15} & \textbf{25.49} \\
        \midrule
        RLCR & 65.33 & 82.36 & 14.43 & 14.46 & 43.96 & 69.38 & 31.14 & 31.17 \\
        \rowcolor{calibrow} + CalibSFT & \textbf{67.20} & \textbf{86.32} & \textbf{12.77} & \textbf{8.59} & \textbf{45.71} & \textbf{71.31} & \textbf{23.24} & \textbf{18.20} \\
        \midrule
        ReDoubt & 65.59 & 81.05 & 15.48 & 16.94 & 43.71 & 69.88 & 25.07 & 23.98 \\
        \rowcolor{calibrow} + CalibSFT & \textbf{67.59} & \textbf{86.99} & \textbf{12.90} & \textbf{14.28} & \textbf{45.55} & \textbf{73.93} & \textbf{22.42} & \textbf{18.89} \\
        \midrule
        CoCA & 63.62 & 80.66 & 16.31 & 19.95 & 43.68 & 66.79 & 26.67 & 24.12 \\
        \rowcolor{calibrow} + CalibSFT & \textbf{65.07} & \textbf{85.63} & \textbf{16.06} & \textbf{18.51} & \textbf{45.55} & \textbf{71.22} & \textbf{22.94} & \textbf{18.89} \\
        \midrule
        DCPO & 64.68 & 78.21 & 14.69 & 12.16 & 43.99 & 68.13 & 32.85 & 32.41 \\
        \rowcolor{calibrow} + CalibSFT & \textbf{64.84} & \textbf{85.08} & \textbf{12.83} & \textbf{10.39} & \textbf{44.82} & \textbf{72.75} & \textbf{21.73} & \textbf{17.31} \\
        \midrule
        C3RL & 65.62 & 68.36 & 25.44 & 28.06 & 43.44 & 62.19 & 45.54 & 48.00 \\
        \rowcolor{calibrow} + CalibSFT & \textbf{66.17} & \textbf{79.70} & \textbf{15.91} & \textbf{14.35} & \textbf{44.60} & \textbf{63.03} & \textbf{23.89} & \textbf{17.38} \\
        \bottomrule
    \end{tabular}
    }
\end{table*}

%% file: tables/confidence_correlation.tex
\begin{table*}[t]
\centering
\caption{Pearson $r$ and Spearman $\rho$ correlation between accuracy and mean confidence across ID and OOD datasets.
Both correlations increase for all RL methods when incorporating CalibSFT, indicating that confidence can better reflect task-level accuracy.}
\label{tab:confidence_correlation}
\small
\setlength{\tabcolsep}{6pt}
\resizebox{0.95\textwidth}{!}{%
\begin{tabular}{ll*{5}{cc}}
\toprule
& & \multicolumn{2}{c}{RLCR} & \multicolumn{2}{c}{ReDoubt} & \multicolumn{2}{c}{CoCA} & \multicolumn{2}{c}{DCPO} & \multicolumn{2}{c}{C3RL} \\
\cmidrule(lr){3-4}\cmidrule(lr){5-6}\cmidrule(lr){7-8}\cmidrule(lr){9-10}\cmidrule(lr){11-12}
Dataset & Method & $r$ & $\rho$ & $r$ & $\rho$ & $r$ & $\rho$ & $r$ & $\rho$ & $r$ & $\rho$ \\
\midrule
 & Baseline & 0.768 & 0.619 & 0.829 & 0.762 & 0.750 & 0.643 & 0.669 & 0.619 & 0.635 & 0.595 \\
\rowcolor{calibrow} \cellcolor{white}\multirow{-2}{*}{ID} & +CalibSFT & \textbf{0.963} & \textbf{0.952} & \textbf{0.976} & \textbf{0.976} & \textbf{0.887} & \textbf{0.976} & \textbf{0.996} & \textbf{0.976} & \textbf{0.896} & \textbf{0.833} \\
\midrule
 & Baseline & 0.505 & 0.548 & 0.667 & 0.762 & 0.447 & 0.571 & 0.467 & 0.548 & 0.681 & 0.714 \\
\rowcolor{calibrow} \cellcolor{white}\multirow{-2}{*}{OOD} & +CalibSFT & \textbf{0.729} & \textbf{0.619} & \textbf{0.738} & \textbf{0.881} & \textbf{0.605} & \textbf{0.619} & \textbf{0.611} & \textbf{0.643} & \textbf{0.848} & \textbf{0.929} \\
\bottomrule
\end{tabular}}
\end{table*}

%% file: tables/format_calibsft.tex
\begin{table*}[t]
    \centering
    \caption{AUROC ($\uparrow$) and ECE ($\downarrow$) of RLCR on Qwen3-8B across confidence formats.
    ``Base'' denotes standard RLCR, and ``+Ours'' denotes RLCR initialized from CalibSFT.
    Across all four formats, CalibSFT consistently improves AUROC and reduces ECE across the 8 ID and 8 OOD benchmarks.}
    \label{tab:format_calibsft}
    \resizebox{0.95\textwidth}{!}{%
    \begin{tabular}{lcccccccc}
\toprule
& \multicolumn{4}{c}{ID} & \multicolumn{4}{c}{OOD} \\
\cmidrule(lr){2-5} \cmidrule(lr){6-9}
& \multicolumn{2}{c}{ECE $\downarrow$} & \multicolumn{2}{c}{AUROC $\uparrow$}
& \multicolumn{2}{c}{ECE $\downarrow$} & \multicolumn{2}{c}{AUROC $\uparrow$} \\
\cmidrule(lr){2-3} \cmidrule(lr){4-5} \cmidrule(lr){6-7} \cmidrule(lr){8-9}
Confidence Format & Base & +Ours & Base & +Ours & Base & +Ours & Base & +Ours \\
\midrule
Linguistic (low / medium / high) & 17.71 & \textbf{15.90} & 73.56 & \textbf{79.58} & 24.06 & \textbf{20.31} & 64.20 & \textbf{67.73} \\
Integer (0--9) & 14.69 & \textbf{14.06} & 79.00 & \textbf{84.14} & 26.31 & \textbf{16.70} & 67.00 & \textbf{69.36} \\
Probability $\{0.00, 0.05, \ldots, 1.00\}$ & 14.05 & \textbf{12.13} & 81.81 & \textbf{86.05} & 32.81 & \textbf{19.30} & 68.72 & \textbf{71.23} \\
\rowcolor{calibrow} Probability $\{0.00, 0.01, \ldots, 1.00\}$ & 14.05 & \textbf{8.59} & 81.81 & \textbf{86.32} & 32.81 & \textbf{18.20} & 68.72 & \textbf{71.31} \\
\bottomrule
\end{tabular}}
\end{table*}

%% file: tables/gemma_calibsft.tex
\begin{table*}[t]
    \centering
    \caption{AUROC ($\uparrow$) and ECE ($\downarrow$) of RLCR on Gemma4-E2B-Instruct across
    benchmarks of varying difficulty. CalibSFT noticeably improves calibration
    across all difficulty levels.}
    \label{tab:gemma_calibsft}
    \resizebox{0.95\textwidth}{!}{%
    \begin{tabular}{ccccccccc}
        \toprule
        & & \multicolumn{3}{c}{In-distribution} & \multicolumn{3}{c}{Out-of-distribution} & \\
        \cmidrule(lr){3-5} \cmidrule(lr){6-8}
        Metric & Method & Math500 & Olympiad & Minerva
        & DROP & HotpotQA & NQOpen & Average \\
        \midrule
         & RLCR
        & 7.84 & 17.69 & 30.34 & 19.92 & 26.41 & 38.25 & 23.41 \\
        \rowcolor{calibrow} \cellcolor{white}\multirow{-2}{*}{ECE $\downarrow$} & +CalibSFT
        & \textbf{5.34} & \textbf{7.76} & \textbf{12.74}
        & \textbf{13.30} & \textbf{13.04} & \textbf{22.77} & \textbf{12.49} \\
        \midrule
         & RLCR
        & 75.47 & 83.24 & 66.29 & 60.60 & 60.48 & 76.85 & 70.49 \\
        \rowcolor{calibrow} \cellcolor{white}\multirow{-2}{*}{AUROC $\uparrow$} & +CalibSFT
        & \textbf{80.34} & \textbf{87.35} & \textbf{70.75}
        & \textbf{65.45} & \textbf{62.66} & \textbf{77.13} & \textbf{73.95} \\
        \bottomrule
    \end{tabular}}
\end{table*}

%% file: tables/format_prompts.tex
\begin{table}[t]
    \centering
    \caption{System prompts for the three confidence formats.}
    \label{tab:format_prompts}
    \small
    \begin{tabularx}{\linewidth}{lX}
        \toprule
        Format & System prompt \\
        \midrule
        Probability &
        A conversation between User and Assistant. The user asks a question, and
        the Assistant solves it. The assistant first thinks about the reasoning
        process in the mind and analyzes its confidence about the solution and
        then provides the user with the final answer as well as its confidence
        level. The confidence level indicates how certain the Assistant is about
        its answer, expressed as a decimal between 0 and 1 with exactly two
        decimal places, enclosed within \texttt{<confidence> </confidence>} tags.
        The response must strictly follow this format: \texttt{<think>} reasoning
        process here \texttt{</think> <answer>} final short answer only
        \texttt{</answer> <confidence>} 0.xx \texttt{</confidence>}. The
        \texttt{<answer>} tag must contain only the final answer string needed for
        exact-match evaluation, not a full sentence, explanation, or reasoning. \\
        \midrule
        Integer &
        A conversation between User and Assistant. The user asks a question, and
        the Assistant solves it. The assistant first thinks about the reasoning
        process in the mind and analyzes its confidence about the solution and
        then provides the user with the final answer as well as its confidence
        score. The confidence score indicates how certain the Assistant is about
        its answer, expressed as a single integer from 0 to 9, enclosed within
        \texttt{<confidence> </confidence>} tags. A score of 0 means very low
        confidence and the answer is likely incorrect. A score of 9 means very
        high confidence and the answer is very likely correct. The response must
        strictly follow this format: \texttt{<think>} reasoning process here
        \texttt{</think> <answer>} final short answer only
        \texttt{</answer> <confidence>}X\texttt{</confidence>}, where X is one of
        0, 1, 2, 3, 4, 5, 6, 7, 8, 9. The \texttt{<answer>} tag must contain only
        the final answer string needed for exact-match evaluation, not a full
        sentence, explanation, or reasoning. \\
        \midrule
        Linguistic &
        A conversation between User and Assistant. The user asks a question, and
        the Assistant solves it. The assistant first thinks about the reasoning
        process in the mind and analyzes its confidence about the solution and
        then provides the user with the final answer as well as its confidence
        level. The confidence level indicates how certain the Assistant is about
        its answer, expressed as exactly one of low, medium, or high, enclosed
        within \texttt{<confidence> </confidence>} tags. Low means the answer is
        likely incorrect, medium means the answer is uncertain, and high means
        the answer is likely correct. The response must strictly follow this
        format: \texttt{<think>} reasoning process here
        \texttt{</think> <answer>} final short answer only
        \texttt{</answer> <confidence>}label\texttt{</confidence>}. The
        \texttt{<answer>} tag must contain only the final answer string needed for
        exact-match evaluation, not a full sentence, explanation, or reasoning. \\
        \bottomrule
    \end{tabularx}
\end{table}

%% file: tables/per_dataset_results.tex
\newcommand{\appendixdatasettable}[2]{%
\begin{subtable}[t]{0.32\textwidth}
\centering
\caption{#1}
\scriptsize
\setlength{\tabcolsep}{2pt}
\renewcommand\arraystretch{0.92}
\begin{tabular}{lcccc}
\toprule
Method & P@1 & AUROC & Brier & ECE \\
\midrule
#2
\bottomrule
\end{tabular}
\end{subtable}%
}

\begin{table*}[!t]
\centering
\caption{Per-dataset results on eight in-domain mathematical benchmarks
spanning a range of difficulty. Each entry reports Pass@1, AUROC, Brier
score, and ECE. CalibSFT initialization improves calibration and discrimination on most
datasets, with the largest gains on benchmarks where baseline RL is heavily
overconfident.}
\label{tab:appendix_id_results}
\appendixdatasettable{DeepScaleR-Eval}{
Base & 61.00 & 66.38 & 33.94 & 33.90 \\
RLVR & \textbf{70.26} & 67.94 & 26.85 & 26.26 \\
\rowcolor{calibrow} CalibSFT & 61.30 & \textbf{76.39} & \textbf{22.65} & \textbf{17.28} \\
\midrule
RLCR & 71.47 & 75.45 & 16.56 & 5.02 \\
\rowcolor{calibrow} + CalibSFT & \textbf{72.47} & \textbf{79.19} & \textbf{15.43} & \textbf{4.63} \\
\midrule
ReDoubt & 71.62 & 75.91 & 17.12 & 10.28 \\
\rowcolor{calibrow} + CalibSFT & \textbf{72.42} & \textbf{77.98} & \textbf{16.07} & \textbf{6.71} \\
\midrule
CoCA & 70.33 & 73.21 & 17.69 & 8.50 \\
\rowcolor{calibrow} + CalibSFT & \textbf{71.54} & \textbf{78.46} & \textbf{16.45} & \textbf{7.76} \\
\midrule
DCPO & \textbf{71.66} & 71.72 & 17.23 & 6.36 \\
\rowcolor{calibrow} + CalibSFT & 70.54 & \textbf{77.54} & \textbf{16.16} & \textbf{5.87} \\
\midrule
C3RL & 70.84 & 62.79 & 24.50 & 24.86 \\
\rowcolor{calibrow} + CalibSFT & \textbf{71.25} & \textbf{71.52} & \textbf{20.08} & \textbf{14.85} \\
}\hfill
\appendixdatasettable{MATH-500}{
Base & 84.50 & 65.22 & 14.18 & 13.27 \\
RLVR & \textbf{89.50} & 67.63 & \textbf{9.80} & \textbf{8.41} \\
\rowcolor{calibrow} CalibSFT & 84.00 & \textbf{81.51} & 16.49 & 14.06 \\
\midrule
RLCR & \textbf{90.80} & 76.45 & 7.42 & 7.53 \\
\rowcolor{calibrow} + CalibSFT & 90.35 & \textbf{89.86} & \textbf{6.46} & \textbf{3.89} \\
\midrule
ReDoubt & \textbf{90.10} & 75.41 & 9.79 & \textbf{15.06} \\
\rowcolor{calibrow} + CalibSFT & 89.70 & \textbf{89.44} & \textbf{9.59} & 16.99 \\
\midrule
CoCA & \textbf{90.10} & 73.02 & 10.57 & 18.22 \\
\rowcolor{calibrow} + CalibSFT & 90.05 & \textbf{87.05} & \textbf{9.08} & \textbf{15.69} \\
\midrule
DCPO & 89.65 & 70.59 & \textbf{7.67} & \textbf{6.52} \\
\rowcolor{calibrow} + CalibSFT & \textbf{90.40} & \textbf{84.55} & 7.71 & 7.83 \\
\midrule
C3RL & \textbf{89.85} & 60.68 & \textbf{8.97} & \textbf{9.30} \\
\rowcolor{calibrow} + CalibSFT & 88.05 & \textbf{84.35} & 10.47 & 9.71 \\
}\hfill
\appendixdatasettable{MinervaMath}{
Base & 47.52 & 60.84 & 46.83 & 47.65 \\
RLVR & \textbf{52.85} & 60.81 & 43.34 & 43.51 \\
\rowcolor{calibrow} CalibSFT & 44.94 & \textbf{66.31} & \textbf{25.37} & \textbf{17.42} \\
\midrule
RLCR & 51.75 & 61.57 & 29.17 & 23.05 \\
\rowcolor{calibrow} + CalibSFT & \textbf{51.93} & \textbf{73.00} & \textbf{22.59} & \textbf{12.00} \\
\midrule
ReDoubt & \textbf{52.30} & 60.64 & 25.36 & 10.67 \\
\rowcolor{calibrow} + CalibSFT & 51.38 & \textbf{74.47} & \textbf{20.96} & \textbf{9.35} \\
\midrule
CoCA & 49.63 & 64.28 & 26.82 & 18.83 \\
\rowcolor{calibrow} + CalibSFT & \textbf{50.55} & \textbf{76.38} & \textbf{21.78} & \textbf{13.77} \\
\midrule
DCPO & \textbf{50.64} & 63.57 & 31.72 & 29.05 \\
\rowcolor{calibrow} + CalibSFT & 50.55 & \textbf{76.00} & \textbf{20.53} & \textbf{8.83} \\
\midrule
C3RL & 50.37 & 57.50 & 45.46 & 46.34 \\
\rowcolor{calibrow} + CalibSFT & \textbf{51.01} & \textbf{73.31} & \textbf{21.79} & \textbf{13.06} \\
}

\medskip
\appendixdatasettable{OlympiadBench}{
Base & \textbf{61.42} & 74.08 & 32.51 & 33.33 \\
RLVR & 61.39 & 73.08 & 34.64 & 34.90 \\
\rowcolor{calibrow} CalibSFT & 60.50 & \textbf{85.75} & \textbf{16.72} & \textbf{11.61} \\
\midrule
RLCR & 63.17 & 83.46 & 16.73 & \textbf{8.48} \\
\rowcolor{calibrow} + CalibSFT & \textbf{63.28} & \textbf{89.00} & \textbf{14.60} & 9.25 \\
\midrule
ReDoubt & 62.80 & 81.10 & 17.14 & \textbf{7.73} \\
\rowcolor{calibrow} + CalibSFT & \textbf{63.58} & \textbf{90.10} & \textbf{14.29} & 14.69 \\
\midrule
CoCA & \textbf{61.57} & 81.47 & 18.37 & \textbf{14.56} \\
\rowcolor{calibrow} + CalibSFT & \textbf{61.57} & \textbf{88.17} & \textbf{17.05} & 17.07 \\
\midrule
DCPO & 62.24 & 79.80 & 18.80 & 12.97 \\
\rowcolor{calibrow} + CalibSFT & \textbf{62.91} & \textbf{87.43} & \textbf{14.76} & \textbf{8.93} \\
\midrule
C3RL & 63.06 & 64.34 & 30.64 & 32.22 \\
\rowcolor{calibrow} + CalibSFT & \textbf{63.61} & \textbf{80.73} & \textbf{18.05} & \textbf{15.26} \\
}\hfill
\appendixdatasettable{GSM8K}{
Base & 94.39 & 67.91 & 5.16 & 3.35 \\
RLVR & \textbf{94.71} & \textbf{68.09} & \textbf{4.99} & \textbf{2.98} \\
\rowcolor{calibrow} CalibSFT & 94.60 & 66.46 & 25.93 & 30.61 \\
\midrule
RLCR & \textbf{94.86} & 78.70 & 4.75 & 5.64 \\
\rowcolor{calibrow} + CalibSFT & 94.77 & \textbf{82.24} & \textbf{4.65} & \textbf{2.81} \\
\midrule
ReDoubt & \textbf{95.00} & 78.24 & \textbf{7.07} & \textbf{16.31} \\
\rowcolor{calibrow} + CalibSFT & 94.77 & \textbf{80.01} & 7.48 & 17.69 \\
\midrule
CoCA & \textbf{94.66} & 68.97 & 8.91 & 21.10 \\
\rowcolor{calibrow} + CalibSFT & 94.64 & \textbf{71.42} & \textbf{8.25} & \textbf{18.18} \\
\midrule
DCPO & \textbf{94.58} & 68.17 & \textbf{5.44} & 9.17 \\
\rowcolor{calibrow} + CalibSFT & 94.50 & \textbf{73.10} & 5.82 & \textbf{7.54} \\
\midrule
C3RL & \textbf{94.43} & 57.33 & \textbf{5.28} & 5.18 \\
\rowcolor{calibrow} + CalibSFT & 94.37 & \textbf{64.41} & 5.30 & \textbf{5.16} \\
}\hfill
\appendixdatasettable{AIME2024}{
Base & 48.23 & 81.93 & 39.61 & 42.50 \\
RLVR & \textbf{52.92} & 82.05 & 39.49 & 41.15 \\
\rowcolor{calibrow} CalibSFT & 47.08 & \textbf{89.04} & \textbf{10.35} & \textbf{7.73} \\
\midrule
RLCR & 59.48 & \textbf{93.59} & 13.17 & 19.70 \\
\rowcolor{calibrow} + CalibSFT & \textbf{64.17} & 92.42 & \textbf{10.97} & \textbf{9.79} \\
\midrule
ReDoubt & 55.62 & 91.94 & 15.40 & 22.77 \\
\rowcolor{calibrow} + CalibSFT & \textbf{61.56} & \textbf{93.81} & \textbf{11.89} & \textbf{14.37} \\
\midrule
CoCA & 53.85 & 92.22 & \textbf{16.27} & \textbf{21.37} \\
\rowcolor{calibrow} + CalibSFT & \textbf{58.54} & \textbf{95.41} & 16.34 & 23.56 \\
\midrule
DCPO & 57.92 & 88.84 & 12.32 & \textbf{8.47} \\
\rowcolor{calibrow} + CalibSFT & \textbf{58.33} & \textbf{93.89} & \textbf{11.02} & 11.85 \\
\midrule
C3RL & 59.69 & 80.93 & 25.25 & 30.19 \\
\rowcolor{calibrow} + CalibSFT & \textbf{64.27} & \textbf{85.24} & \textbf{14.04} & \textbf{12.99} \\
}

\medskip
\appendixdatasettable{AIME2025}{
Base & 35.00 & 82.75 & 48.25 & 52.81 \\
RLVR & \textbf{43.44} & 84.95 & 45.87 & 49.00 \\
\rowcolor{calibrow} CalibSFT & 35.83 & \textbf{86.49} & \textbf{9.61} & \textbf{6.59} \\
\midrule
RLCR & 43.33 & \textbf{95.53} & \textbf{13.60} & 23.06 \\
\rowcolor{calibrow} + CalibSFT & \textbf{48.02} & 92.40 & 13.87 & \textbf{14.28} \\
\midrule
ReDoubt & 44.90 & 94.22 & 15.19 & 26.51 \\
\rowcolor{calibrow} + CalibSFT & \textbf{51.77} & \textbf{96.32} & \textbf{11.28} & \textbf{16.79} \\
\midrule
CoCA & 40.31 & \textbf{95.39} & \textbf{17.04} & 29.31 \\
\rowcolor{calibrow} + CalibSFT & \textbf{45.00} & 94.21 & 19.99 & \textbf{26.43} \\
\midrule
DCPO & 41.15 & 91.19 & \textbf{13.33} & \textbf{14.30} \\
\rowcolor{calibrow} + CalibSFT & \textbf{41.25} & \textbf{95.07} & 13.81 & 17.73 \\
\midrule
C3RL & \textbf{43.96} & 79.01 & 36.25 & 42.69 \\
\rowcolor{calibrow} + CalibSFT & 43.12 & \textbf{86.74} & \textbf{21.86} & \textbf{25.49} \\
}\hfill
\appendixdatasettable{AIME2026}{
Base & 42.50 & 76.08 & 45.13 & 48.07 \\
RLVR & \textbf{48.23} & 80.70 & 43.82 & 45.92 \\
\rowcolor{calibrow} CalibSFT & 39.79 & \textbf{87.44} & \textbf{11.26} & \textbf{8.52} \\
\midrule
RLCR & 47.81 & \textbf{94.10} & 14.08 & 23.19 \\
\rowcolor{calibrow} + CalibSFT & \textbf{52.60} & 92.48 & \textbf{13.57} & \textbf{12.04} \\
\midrule
ReDoubt & 52.40 & 90.93 & 16.78 & 26.20 \\
\rowcolor{calibrow} + CalibSFT & \textbf{55.52} & \textbf{93.82} & \textbf{11.66} & \textbf{17.69} \\
\midrule
CoCA & 48.54 & \textbf{96.74} & \textbf{14.84} & 27.72 \\
\rowcolor{calibrow} + CalibSFT & \textbf{48.65} & 93.96 & 19.52 & \textbf{25.63} \\
\midrule
DCPO & 49.58 & 91.78 & \textbf{11.04} & \textbf{10.41} \\
\rowcolor{calibrow} + CalibSFT & \textbf{50.21} & \textbf{93.07} & 12.87 & 14.52 \\
\midrule
C3RL & 52.81 & 84.29 & 27.21 & 33.72 \\
\rowcolor{calibrow} + CalibSFT & \textbf{53.65} & \textbf{91.31} & \textbf{15.69} & \textbf{18.30} \\
}\hfill
\appendixdatasettable{Overall (ID)}{
Base & 59.32 & 71.90 & 33.20 & 34.36 \\
RLVR & \textbf{64.16} & 73.16 & 31.10 & 31.52 \\
\rowcolor{calibrow} CalibSFT & 58.51 & \textbf{79.93} & \textbf{17.30} & \textbf{14.23} \\
\midrule
RLCR & 65.33 & 82.36 & 14.43 & 14.46 \\
\rowcolor{calibrow} + CalibSFT & \textbf{67.20} & \textbf{86.32} & \textbf{12.77} & \textbf{8.59} \\
\midrule
ReDoubt & 65.59 & 81.05 & 15.48 & 16.94 \\
\rowcolor{calibrow} + CalibSFT & \textbf{67.59} & \textbf{86.99} & \textbf{12.90} & \textbf{14.28} \\
\midrule
CoCA & 63.62 & 80.66 & 16.31 & 19.95 \\
\rowcolor{calibrow} + CalibSFT & \textbf{65.07} & \textbf{85.63} & \textbf{16.06} & \textbf{18.51} \\
\midrule
DCPO & 64.68 & 78.21 & 14.69 & 12.16 \\
\rowcolor{calibrow} + CalibSFT & \textbf{64.84} & \textbf{85.08} & \textbf{12.83} & \textbf{10.39} \\
\midrule
C3RL & 65.62 & 68.36 & 25.44 & 28.06 \\
\rowcolor{calibrow} + CalibSFT & \textbf{66.17} & \textbf{79.70} & \textbf{15.91} & \textbf{14.35} \\
}
\end{table*}

\begin{table*}[!t]
\centering
\caption{Per-dataset results on eight out-of-domain general reasoning
benchmarks spanning a range of difficulty. Each entry reports Pass@1, AUROC,
Brier score, and ECE. CalibSFT initialization improves calibration and
discrimination on most datasets, with the largest gains on benchmarks where
baseline RL is heavily overconfident.}
\label{tab:appendix_ood_results}
\appendixdatasettable{HotpotQA}{
Base & 64.68 & \textbf{57.68} & 32.82 & 32.16 \\
RLVR & 64.58 & 54.16 & 33.10 & 32.17 \\
\rowcolor{calibrow} CalibSFT & \textbf{66.10} & 55.08 & \textbf{30.84} & \textbf{22.43} \\
\midrule
RLCR & 65.80 & 59.10 & 25.97 & 19.77 \\
\rowcolor{calibrow} + CalibSFT & \textbf{66.27} & \textbf{59.23} & \textbf{24.25} & \textbf{10.85} \\
\midrule
ReDoubt & \textbf{64.83} & 60.64 & 23.52 & 12.11 \\
\rowcolor{calibrow} + CalibSFT & 64.68 & \textbf{63.08} & \textbf{21.70} & \textbf{1.37} \\
\midrule
CoCA & 64.90 & 56.43 & \textbf{22.77} & \textbf{6.02} \\
\rowcolor{calibrow} + CalibSFT & \textbf{66.10} & \textbf{61.57} & 22.92 & 11.39 \\
\midrule
DCPO & \textbf{66.15} & 57.88 & 25.33 & 18.77 \\
\rowcolor{calibrow} + CalibSFT & 65.97 & \textbf{60.70} & \textbf{24.43} & \textbf{14.92} \\
\midrule
C3RL & 65.38 & 52.47 & 33.39 & 33.12 \\
\rowcolor{calibrow} + CalibSFT & \textbf{66.50} & \textbf{58.94} & \textbf{28.45} & \textbf{19.78} \\
}\hfill
\appendixdatasettable{TriviaQA}{
Base & 55.24 & 75.80 & 35.68 & 36.93 \\
RLVR & 54.97 & \textbf{76.01} & 40.59 & 41.40 \\
\rowcolor{calibrow} CalibSFT & \textbf{55.53} & 64.10 & \textbf{26.74} & \textbf{15.17} \\
\midrule
RLCR & 55.24 & \textbf{81.66} & 23.84 & 23.30 \\
\rowcolor{calibrow} + CalibSFT & \textbf{55.25} & 81.53 & \textbf{18.89} & \textbf{10.71} \\
\midrule
ReDoubt & 55.69 & 82.96 & \textbf{18.67} & \textbf{13.58} \\
\rowcolor{calibrow} + CalibSFT & \textbf{56.14} & \textbf{83.92} & 19.27 & 15.58 \\
\midrule
CoCA & 54.77 & 77.62 & 21.63 & \textbf{12.97} \\
\rowcolor{calibrow} + CalibSFT & \textbf{54.89} & \textbf{81.85} & \textbf{20.98} & 13.16 \\
\midrule
DCPO & 54.55 & 79.86 & 25.93 & 23.91 \\
\rowcolor{calibrow} + CalibSFT & \textbf{55.12} & \textbf{82.56} & \textbf{17.59} & \textbf{8.25} \\
\midrule
C3RL & 53.76 & \textbf{73.01} & 35.61 & 38.53 \\
\rowcolor{calibrow} + CalibSFT & \textbf{54.11} & 71.36 & \textbf{22.75} & \textbf{12.50} \\
}\hfill
\appendixdatasettable{DROP}{
Base & 80.36 & 59.13 & 18.22 & 17.03 \\
RLVR & \textbf{83.53} & \textbf{59.32} & \textbf{15.34} & \textbf{13.31} \\
\rowcolor{calibrow} CalibSFT & 79.76 & 58.30 & 33.30 & 32.64 \\
\midrule
RLCR & 81.44 & 63.34 & \textbf{14.72} & \textbf{5.43} \\
\rowcolor{calibrow} + CalibSFT & \textbf{82.32} & \textbf{74.44} & 14.99 & 10.75 \\
\midrule
ReDoubt & \textbf{83.63} & 63.27 & \textbf{14.76} & \textbf{10.80} \\
\rowcolor{calibrow} + CalibSFT & 82.53 & \textbf{75.90} & 15.60 & 17.36 \\
\midrule
CoCA & 82.47 & 61.58 & \textbf{15.03} & \textbf{11.56} \\
\rowcolor{calibrow} + CalibSFT & \textbf{82.50} & \textbf{73.91} & 17.95 & 21.99 \\
\midrule
DCPO & \textbf{83.45} & 59.34 & \textbf{13.25} & \textbf{2.30} \\
\rowcolor{calibrow} + CalibSFT & 80.42 & \textbf{75.92} & 16.54 & 13.64 \\
\midrule
C3RL & \textbf{82.65} & 54.56 & \textbf{16.37} & \textbf{16.07} \\
\rowcolor{calibrow} + CalibSFT & 77.44 & \textbf{64.78} & 23.65 & 22.03 \\
}

\medskip
\appendixdatasettable{MuSiQue}{
Base & 45.84 & \textbf{62.90} & 41.91 & 42.33 \\
RLVR & 47.18 & 58.12 & 46.19 & 46.75 \\
\rowcolor{calibrow} CalibSFT & \textbf{48.09} & 58.78 & \textbf{28.43} & \textbf{16.30} \\
\midrule
RLCR & 47.77 & 66.21 & 25.68 & 15.35 \\
\rowcolor{calibrow} + CalibSFT & \textbf{49.97} & \textbf{67.87} & \textbf{23.22} & \textbf{6.13} \\
\midrule
ReDoubt & 46.90 & 66.37 & 24.07 & 11.21 \\
\rowcolor{calibrow} + CalibSFT & \textbf{48.52} & \textbf{70.04} & \textbf{22.42} & \textbf{5.52} \\
\midrule
CoCA & 46.69 & \textbf{67.03} & 23.69 & 10.18 \\
\rowcolor{calibrow} + CalibSFT & \textbf{48.76} & 66.88 & \textbf{23.56} & \textbf{7.72} \\
\midrule
DCPO & 47.34 & 65.48 & 28.28 & 20.53 \\
\rowcolor{calibrow} + CalibSFT & \textbf{48.96} & \textbf{70.49} & \textbf{22.75} & \textbf{8.55} \\
\midrule
C3RL & 45.55 & 63.39 & 39.88 & 40.04 \\
\rowcolor{calibrow} + CalibSFT & \textbf{47.47} & \textbf{66.39} & \textbf{24.46} & \textbf{11.87} \\
}\hfill
\appendixdatasettable{LiveBenchReasoning}{
Base & 52.88 & 61.81 & 40.53 & 40.59 \\
RLVR & 29.25 & 60.32 & 61.75 & 64.59 \\
\rowcolor{calibrow} CalibSFT & \textbf{55.75} & \textbf{69.92} & \textbf{34.64} & \textbf{35.20} \\
\midrule
RLCR & 31.75 & 68.00 & 28.49 & 29.09 \\
\rowcolor{calibrow} + CalibSFT & \textbf{42.25} & \textbf{68.41} & \textbf{21.94} & \textbf{7.75} \\
\midrule
ReDoubt & 25.75 & 68.44 & 23.51 & 24.86 \\
\rowcolor{calibrow} + CalibSFT & \textbf{42.25} & \textbf{80.49} & \textbf{19.95} & \textbf{12.38} \\
\midrule
CoCA & 32.12 & 66.75 & 25.79 & 23.64 \\
\rowcolor{calibrow} + CalibSFT & \textbf{41.50} & \textbf{68.03} & \textbf{23.24} & \textbf{7.96} \\
\midrule
DCPO & 31.13 & 67.14 & 32.10 & 33.12 \\
\rowcolor{calibrow} + CalibSFT & \textbf{37.75} & \textbf{71.77} & \textbf{21.12} & \textbf{8.83} \\
\midrule
C3RL & 33.00 & \textbf{57.18} & 53.55 & 55.46 \\
\rowcolor{calibrow} + CalibSFT & \textbf{44.62} & 53.33 & \textbf{28.11} & \textbf{22.78} \\
}\hfill
\appendixdatasettable{NQOpen}{
Base & 21.12 & 66.31 & 66.04 & 70.59 \\
RLVR & \textbf{24.13} & \textbf{67.07} & 68.92 & 71.55 \\
\rowcolor{calibrow} CalibSFT & 21.57 & 59.58 & \textbf{25.69} & \textbf{26.36} \\
\midrule
RLCR & 23.32 & 70.48 & 43.84 & 52.43 \\
\rowcolor{calibrow} + CalibSFT & \textbf{23.80} & \textbf{72.24} & \textbf{26.44} & \textbf{30.89} \\
\midrule
ReDoubt & \textbf{24.67} & 70.28 & 32.15 & 39.19 \\
\rowcolor{calibrow} + CalibSFT & 24.10 & \textbf{70.60} & \textbf{26.80} & \textbf{31.99} \\
\midrule
CoCA & 22.76 & 66.59 & 35.28 & 43.26 \\
\rowcolor{calibrow} + CalibSFT & \textbf{23.72} & \textbf{70.77} & \textbf{24.86} & \textbf{28.69} \\
\midrule
DCPO & \textbf{23.69} & 69.80 & 46.27 & 53.69 \\
\rowcolor{calibrow} + CalibSFT & 23.58 & \textbf{73.13} & \textbf{22.65} & \textbf{25.97} \\
\midrule
C3RL & \textbf{22.43} & \textbf{64.77} & 63.63 & 68.49 \\
\rowcolor{calibrow} + CalibSFT & 22.08 & 61.18 & \textbf{22.25} & \textbf{17.54} \\
}

\medskip
\appendixdatasettable{PopQA}{
Base & 20.85 & \textbf{74.94} & 57.89 & 64.10 \\
RLVR & \textbf{21.37} & 72.78 & 68.14 & 71.96 \\
\rowcolor{calibrow} CalibSFT & 21.06 & 64.33 & \textbf{24.14} & \textbf{25.70} \\
\midrule
RLCR & \textbf{21.29} & \textbf{76.88} & 37.71 & 47.90 \\
\rowcolor{calibrow} + CalibSFT & 21.16 & 76.55 & \textbf{24.76} & \textbf{32.22} \\
\midrule
ReDoubt & \textbf{21.37} & 76.83 & 26.82 & 36.23 \\
\rowcolor{calibrow} + CalibSFT & 21.25 & \textbf{77.87} & \textbf{24.54} & \textbf{32.32} \\
\midrule
CoCA & \textbf{21.06} & 73.92 & 30.84 & 39.88 \\
\rowcolor{calibrow} + CalibSFT & 20.88 & \textbf{76.99} & \textbf{24.12} & \textbf{31.76} \\
\midrule
DCPO & 20.99 & \textbf{77.02} & 41.10 & 49.96 \\
\rowcolor{calibrow} + CalibSFT & \textbf{21.18} & 76.52 & \textbf{21.29} & \textbf{28.02} \\
\midrule
C3RL & \textbf{20.81} & \textbf{71.98} & 54.60 & 61.92 \\
\rowcolor{calibrow} + CalibSFT & 20.66 & 66.98 & \textbf{16.56} & \textbf{12.95} \\
}\hfill
\appendixdatasettable{WebQuestions}{
Base & 21.88 & \textbf{67.10} & 68.16 & 71.80 \\
RLVR & \textbf{26.02} & 65.76 & 68.29 & 70.41 \\
\rowcolor{calibrow} CalibSFT & 23.11 & 59.54 & \textbf{29.45} & \textbf{30.13} \\
\midrule
RLCR & \textbf{25.07} & 69.40 & 48.86 & 56.11 \\
\rowcolor{calibrow} + CalibSFT & 24.63 & \textbf{70.20} & \textbf{31.41} & \textbf{36.33} \\
\midrule
ReDoubt & \textbf{26.84} & \textbf{70.22} & 37.08 & 43.81 \\
\rowcolor{calibrow} + CalibSFT & 24.91 & 69.57 & \textbf{29.11} & \textbf{34.59} \\
\midrule
CoCA & 24.65 & 64.40 & 38.35 & 45.46 \\
\rowcolor{calibrow} + CalibSFT & \textbf{26.06} & \textbf{69.71} & \textbf{25.93} & \textbf{28.47} \\
\midrule
DCPO & 24.59 & 68.48 & 50.52 & 57.04 \\
\rowcolor{calibrow} + CalibSFT & \textbf{25.61} & \textbf{70.87} & \textbf{27.45} & \textbf{30.32} \\
\midrule
C3RL & \textbf{23.94} & 60.19 & 67.27 & 70.35 \\
\rowcolor{calibrow} + CalibSFT & \textbf{23.94} & \textbf{61.29} & \textbf{24.87} & \textbf{19.55} \\
}\hfill
\appendixdatasettable{Overall (OOD)}{
Base & 45.36 & \textbf{65.71} & 45.15 & 46.94 \\
RLVR & 43.88 & 64.19 & 50.29 & 51.52 \\
\rowcolor{calibrow} CalibSFT & \textbf{46.37} & 61.20 & \textbf{29.15} & \textbf{25.49} \\
\midrule
RLCR & 43.96 & 69.38 & 31.14 & 31.17 \\
\rowcolor{calibrow} + CalibSFT & \textbf{45.71} & \textbf{71.31} & \textbf{23.24} & \textbf{18.20} \\
\midrule
ReDoubt & 43.71 & 69.88 & 25.07 & 23.98 \\
\rowcolor{calibrow} + CalibSFT & \textbf{45.55} & \textbf{73.93} & \textbf{22.42} & \textbf{18.89} \\
\midrule
CoCA & 43.68 & 66.79 & 26.67 & 24.12 \\
\rowcolor{calibrow} + CalibSFT & \textbf{45.55} & \textbf{71.22} & \textbf{22.94} & \textbf{18.89} \\
\midrule
DCPO & 43.99 & 68.13 & 32.85 & 32.41 \\
\rowcolor{calibrow} + CalibSFT & \textbf{44.82} & \textbf{72.75} & \textbf{21.73} & \textbf{17.31} \\
\midrule
C3RL & 43.44 & 62.19 & 45.54 & 48.00 \\
\rowcolor{calibrow} + CalibSFT & \textbf{44.60} & \textbf{63.03} & \textbf{23.89} & \textbf{17.38} \\
}
\end{table*}

%% file: tables/seed_stability.tex
\begin{table}[t]
    \centering
    \caption{Performance stability across random seeds on Qwen3-8B.
    Each entry reports the mean and standard deviation
    over 3 random seeds (43, 44, and 45), following the evaluation protocol of Table~\ref{tab:main_results}.
    The variation remains small across all metrics, and CalibSFT consistently outperforms RL baselines while maintaining task accuracy.}
    \label{tab:seed_stability}
    \resizebox{\linewidth}{!}{%
    \renewcommand\arraystretch{1.0}
    \begin{tabular}{lcccccccc}
        \toprule
        & \multicolumn{4}{c}{In-distribution} & \multicolumn{4}{c}{Out-of-distribution} \\
        \cmidrule(lr){2-5} \cmidrule(lr){6-9}
        Method & Pass@1 $\uparrow$ & AUROC $\uparrow$ & Brier $\downarrow$
        & ECE $\downarrow$
        & Pass@1 $\uparrow$ & AUROC $\uparrow$ & Brier $\downarrow$
        & ECE $\downarrow$ \\
        \midrule
        RLCR & 65.50$_{\pm0.66}$ & 82.08$_{\pm0.38}$ & 14.64$_{\pm0.29}$ & 14.26$_{\pm0.29}$ & 43.53$_{\pm0.61}$ & 69.05$_{\pm0.47}$ & 31.67$_{\pm0.75}$ & 31.99$_{\pm1.15}$ \\
        \rowcolor{calibrow} + CalibSFT & \textbf{67.15}$_{\pm0.92}$ & \textbf{86.29}$_{\pm0.19}$ & \textbf{12.98}$_{\pm0.47}$ & \textbf{9.36}$_{\pm1.51}$ & \textbf{45.69}$_{\pm0.12}$ & \textbf{71.77}$_{\pm0.65}$ & \textbf{22.70}$_{\pm0.77}$ & \textbf{17.75}$_{\pm0.64}$ \\
        \midrule
        DCPO & 64.69$_{\pm0.21}$ & 78.40$_{\pm0.41}$ & 14.63$_{\pm0.19}$ & 11.78$_{\pm0.33}$ & 43.77$_{\pm0.39}$ & 68.37$_{\pm0.54}$ & 33.22$_{\pm1.15}$ & 32.69$_{\pm1.09}$ \\
        \rowcolor{calibrow} + CalibSFT & \textbf{64.90}$_{\pm0.19}$ & \textbf{85.22}$_{\pm0.15}$ & \textbf{12.45}$_{\pm0.29}$ & \textbf{10.15}$_{\pm0.60}$ & \textbf{44.89}$_{\pm0.12}$ & \textbf{72.91}$_{\pm0.42}$ & \textbf{21.60}$_{\pm0.58}$ & \textbf{17.39}$_{\pm0.36}$ \\

        \bottomrule
    \end{tabular}
    }
\end{table}

%% file: tables/additional_metrics.tex
\begin{table*}[t]
    \centering
    \caption{Calibration performance on additional metrics, on the in-distribution
    and out-of-distribution splits and under the protocol of Table~\ref{tab:main_results}. PCE is
    the part of ECE contributed by the overconfident bins, Cap.\ Brier scores each question's mean
    confidence against its rollout success rate, E-AURC is the excess of the risk--coverage area
    over a perfect ranking, and RCE measures the deviation from a monotone correspondence between
    confidence and expected correctness.
 Across all baseline RL methods, incorporating CalibSFT reduces overconfidence and misranking errors (lower PCE, E-AURC, and RCE) while improving question-level confidence alignment (lower Cap.~Brier).}

    \label{tab:additional_metrics}
    \resizebox{0.98\textwidth}{!}{%
    \renewcommand\arraystretch{1.0}
    \begin{tabular}{lcccccccc}
        \toprule
        & \multicolumn{4}{c}{In-distribution} & \multicolumn{4}{c}{Out-of-distribution} \\
        \cmidrule(lr){2-5} \cmidrule(lr){6-9}
        Method & PCE $\downarrow$ & Cap.\ Brier $\downarrow$ & E-AURC $\downarrow$ & RCE $\downarrow$ & PCE $\downarrow$ & Cap.\ Brier $\downarrow$ & E-AURC $\downarrow$ & RCE $\downarrow$ \\
        \midrule
        RLCR & 9.71 & 10.63 & 7.75 & 13.50 & 30.99 & 27.53 & 19.74 & 11.45 \\
        \rowcolor{calibrow} + CalibSFT & \textbf{7.30} & \textbf{8.66} & \textbf{5.61} & \textbf{5.30} & \textbf{14.77} & \textbf{18.72} & \textbf{17.14} & \textbf{4.93} \\
        \midrule
        ReDoubt & 6.71 & 11.98 & 8.02 & 15.23 & 22.51 & 21.71 & 18.34 & 14.77 \\
        \rowcolor{calibrow} + CalibSFT & \textbf{5.54} & \textbf{9.33} & \textbf{4.77} & \textbf{7.69} & \textbf{15.58} & \textbf{18.80} & \textbf{15.85} & \textbf{4.70} \\
        \midrule
        CoCA & \textbf{10.96} & 12.82 & 8.88 & 19.49 & 22.64 & 22.84 & 21.52 & 15.13 \\
        \rowcolor{calibrow} + CalibSFT & 11.40 & \textbf{12.24} & \textbf{5.50} & \textbf{8.58} & \textbf{13.39} & \textbf{19.41} & \textbf{17.82} & \textbf{5.79} \\
        \midrule
        DCPO & 10.04 & 10.95 & 10.10 & 18.79 & 31.97 & 28.95 & 19.62 & 15.40 \\
        \rowcolor{calibrow} + CalibSFT & \textbf{8.09} & \textbf{9.00} & \textbf{5.92} & \textbf{9.21} & \textbf{12.95} & \textbf{17.54} & \textbf{16.23} & \textbf{6.18} \\
        \midrule
        C3RL & 28.01 & 20.75 & 16.11 & 33.15 & 47.96 & 41.92 & 25.22 & 27.57 \\
        \rowcolor{calibrow} + CalibSFT & \textbf{10.72} & \textbf{10.82} & \textbf{8.76} & \textbf{17.24} & \textbf{8.80} & \textbf{18.46} & \textbf{21.27} & \textbf{19.08} \\
        \bottomrule
    \end{tabular}
    }
\end{table*}

%% file: tables/diversity_metrics.tex
\begin{table*}[t]
    \centering
    \caption{Confidence diversity on the in-distribution and out-of-distribution splits, under the
    protocol of Table~\ref{tab:main_results}. Support counts the unique confidence values a model
    produces, Top-1 and Top-3 are the share of responses taking its most frequent one and three
    values, and Effective Support (Eff.\ Supp.) is the exponential of the entropy of the value distribution. Baseline RL methods collapse into narrow confidence supports with heavy concentration on top values,
     whereas CalibSFT broadens both support size and effective support for all objectives.}

    \label{tab:diversity_metrics}
    \resizebox{0.98\textwidth}{!}{%
    \renewcommand\arraystretch{1.0}
    \begin{tabular}{lcccccccc}
        \toprule
        & \multicolumn{4}{c}{In-distribution} & \multicolumn{4}{c}{Out-of-distribution} \\
        \cmidrule(lr){2-5} \cmidrule(lr){6-9}
        Method & Support $\uparrow$ & Top-1 $\downarrow$ & Top-3 $\downarrow$ & Eff.\ Supp.\ $\uparrow$ & Support $\uparrow$ & Top-1 $\downarrow$ & Top-3 $\downarrow$ & Eff.\ Supp.\ $\uparrow$ \\
        \midrule
        RLCR & 15.75 & 33.30 & 67.74 & 6.90 & 18.62 & 33.18 & 66.22 & 7.03 \\
        \rowcolor{calibrow} + CalibSFT & \textbf{86.88} & \textbf{12.59} & \textbf{20.57} & \textbf{49.42} & \textbf{94.50} & \textbf{3.73} & \textbf{9.90} & \textbf{72.83} \\
        \midrule
        ReDoubt & 7.62 & 43.17 & 92.47 & 3.50 & 10.88 & 44.84 & 86.31 & 4.20 \\
        \rowcolor{calibrow} + CalibSFT & \textbf{58.00} & \textbf{9.43} & \textbf{23.64} & \textbf{32.03} & \textbf{59.50} & \textbf{7.83} & \textbf{20.70} & \textbf{31.62} \\
        \midrule
        CoCA & 12.75 & 55.50 & 77.38 & 4.85 & 13.62 & 46.41 & 67.06 & 6.06 \\
        \rowcolor{calibrow} + CalibSFT & \textbf{47.62} & \textbf{13.62} & \textbf{30.12} & \textbf{26.22} & \textbf{52.12} & \textbf{10.71} & \textbf{26.88} & \textbf{24.69} \\
        \midrule
        DCPO & 15.75 & 59.32 & 80.94 & 4.40 & 18.62 & 43.97 & 73.64 & 6.06 \\
        \rowcolor{calibrow} + CalibSFT & \textbf{72.75} & \textbf{21.54} & \textbf{41.93} & \textbf{27.14} & \textbf{84.25} & \textbf{5.78} & \textbf{15.07} & \textbf{53.93} \\
        \midrule
        C3RL & 8.50 & 83.23 & 96.46 & 1.97 & 9.38 & 71.04 & 91.77 & 2.79 \\
        \rowcolor{calibrow} + CalibSFT & \textbf{10.00} & \textbf{46.09} & \textbf{70.97} & \textbf{5.68} & \textbf{10.00} & \textbf{36.37} & \textbf{65.54} & \textbf{6.46} \\
        \bottomrule
    \end{tabular}
    }
\end{table*}

%% file: tables/training_target_distribution.tex
\begin{table}[t]
    \centering
    \caption{Distribution (\%) of training targets across ten confidence intervals.
    Target scores derived from prior methods (UaIT, SaySelf, LoVeC) are concentrated or skewed,
    whereas CalibSFT provides a balanced, near-uniform distribution across the entire range.}
    \label{tab:training_target_distribution}
    \footnotesize
    \setlength{\tabcolsep}{3pt}
    \resizebox{\linewidth}{!}{%
    \begin{tabular}{lrrrrrrrrrr}
        \toprule
        Target & $[0.0, 0.1]$ & $(0.1, 0.2]$ & $(0.2, 0.3]$ & $(0.3, 0.4]$ & $(0.4, 0.5]$
        & $(0.5, 0.6]$ & $(0.6, 0.7]$ & $(0.7, 0.8]$ & $(0.8, 0.9]$ & $(0.9, 1.0]$ \\
        \midrule
        CSFT & 11.11 & 11.11 & 11.11 & 11.11 & 11.11 & 11.11 & 11.11 & 11.11 & 11.11 & 0.00 \\
        UaIT & 0.00 & 0.00 & 0.00 & 0.00 & 0.00 & 0.00 & 6.48 & 56.06 & 35.15 & 2.31 \\
        SaySelf & 22.07 & 14.41 & 11.93 & 10.22 & 9.04 & 8.26 & 8.89 & 6.63 & 5.56 & 3.00 \\
        LoVeC & 27.33 & 3.78 & 8.17 & 0.20 & 1.37 & 0.63 & 0.22 & 1.54 & 15.80 & 40.96 \\
        \rowcolor{calibrow} CalibSFT & 11.11 & 11.11 & 11.11 & 11.11 & 5.56 & 11.11 & 11.11 & 11.11 & 11.11 & 5.56 \\
        \bottomrule
    \end{tabular}}
\end{table}

%% file: tables/target_construction_lambda.tex
\begin{table*}[t]
    \centering
    \begin{minipage}[t]{0.49\textwidth}
        \vspace{0pt}
        \centering
        \caption{AUROC and Brier score of five target construction methods on ID mathematical benchmarks. Pre-SFT metrics characterize the constructed targets, whereas Post-SFT metrics evaluate the fine-tuned models.}
        \label{tab:target_construction}
        \footnotesize
        \setlength{\tabcolsep}{3pt}
        \resizebox{\linewidth}{!}{%
        \begin{tabular}{lcccc}
        \toprule
        & \multicolumn{2}{c}{Pre-SFT targets} & \multicolumn{2}{c}{Post-SFT predictions} \\
        \cmidrule(lr){2-3} \cmidrule(lr){4-5}
        Target & AUROC $\uparrow$ & Brier $\downarrow$
        & AUROC $\uparrow$ & Brier $\downarrow$ \\
        \midrule
        CSFT & 50.00 & 31.67 & 73.46 & 19.15 \\
        UaIT & 52.05 & 33.17 & 74.72 & 24.24 \\
        SaySelf & -- & 41.33 & 57.69 & 39.38 \\
        LoVeC & 75.40 & 24.71 & 71.11 & 32.32 \\
        \rowcolor{calibrow} CalibSFT & \textbf{100.00} & \textbf{7.92}
        & \textbf{79.68} & \textbf{17.74} \\
        \bottomrule
    \end{tabular}}
    \end{minipage}\hfill
    \begin{minipage}[t]{0.45\textwidth}
        \vspace{0pt}
        \centering
        \caption{AUROC and Brier score on eight ID and eight OOD benchmarks across mixing weights. $\lambda=0.5$ achieves the highest AUROC on both splits and the lowest Brier score on ID benchmarks.}

        \label{tab:lambda_ablation}
        \footnotesize
        \setlength{\tabcolsep}{4pt}
        \resizebox{\linewidth}{!}{%
        \begin{tabular}{crrrrr}
        \toprule
        & \multicolumn{2}{c}{ID} & \multicolumn{2}{c}{OOD} \\
        \cmidrule(lr){2-3} \cmidrule(lr){4-5}
        $\lambda$ & AUROC $\uparrow$ & Brier $\downarrow$
        & AUROC $\uparrow$ & Brier $\downarrow$ \\
        \midrule
        0 & 68.10 & 32.49 & 59.20 & 38.92 \\
        0.25 & 78.02 & 18.83 & 66.88 & 29.94 \\
        \rowcolor{calibrow} 0.5 & \textbf{79.68} & \textbf{17.74} & \textbf{67.81} & 30.82 \\
        0.75 & 75.93 & 20.00 & 65.82 & \textbf{28.37} \\
        1 & 73.29 & 19.90 & 57.22 & 35.72 \\
        \bottomrule
    \end{tabular}}
    \end{minipage}
\end{table*}

%% file: tables/balance_ablation.tex
\begin{table}[t]
    \centering
    \caption{Ablation of target-balanced sampling on DeepScaleR-Eval. Balancing substantially broadens confidence support and prevents concentration at extremes, while maintaining comparable task performance (Pass@1) and calibration.}
    \label{tab:balance_ablation}
    \resizebox{\linewidth}{!}{%
    \begin{tabular}{l c ccc cccc}
        \toprule
        & \textbf{Accuracy} & \multicolumn{3}{c}{\textbf{Calibration Metrics}} & \multicolumn{4}{c}{\textbf{Confidence Diversity}} \\
        \cmidrule(lr){2-2} \cmidrule(lr){3-5} \cmidrule(lr){6-9}
        Sampling & Pass@1 $\uparrow$ & ECE $\downarrow$ & Brier $\downarrow$ & AUROC $\uparrow$
        & Support $\uparrow$ & Eff. Supp. $\uparrow$ & Top-1 $\downarrow$ & Top-3 $\downarrow$ \\
        \midrule
        Balanced   & \textbf{58.50} & \textbf{14.23} & 17.30 & 79.92 & \textbf{85.62} & \textbf{29.86} & \textbf{28.89} & \textbf{47.05} \\
        Unbalanced & 58.03          & 16.41          & \textbf{16.64} & \textbf{81.74} & 21.75          & 3.33           & 61.57          & 92.25          \\
        \bottomrule
    \end{tabular}%
    }
    \vspace{8pt}
    \includegraphics[width=0.78\linewidth]{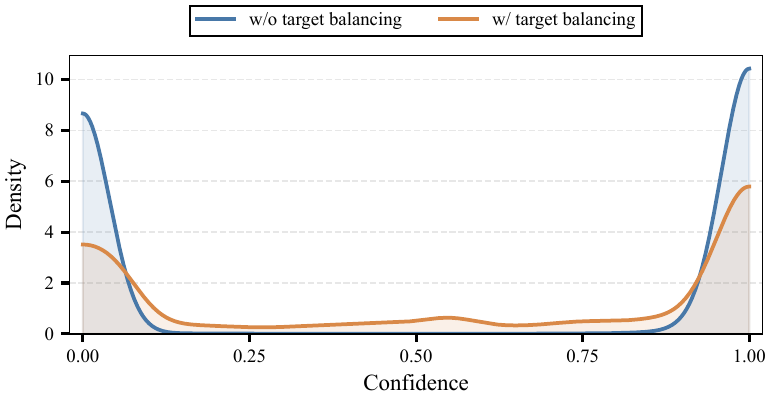}
    \captionof{figure}{Confidence distributions on DeepScaleR-Eval after CalibSFT. Target-balanced sampling prevents probability mass from collapsing to the extremes ($0$ and $1$), fostering a broader distribution.}
    \label{fig:balance_ablation_confidence}
\end{table}

%% file: tables/supervision_ablation.tex
\begin{table}[t]
    \centering
    \caption{Ablation of the CalibSFT supervision design on eight ID benchmarks.
    The variants differ in the supervised responses for each segment and in
    loss balancing.
    CalibSFT achieves the best calibration
    while keeping Pass@1 close to the best.}
    \label{tab:supervision_ablation}
    \footnotesize
    \setlength{\tabcolsep}{3pt}
    \resizebox{\linewidth}{!}{%
    \begin{tabular}{lccccc rrrr}
        \toprule
        & \multicolumn{2}{c}{Reasoning/answer}
        & \multicolumn{2}{c}{Confidence}
        & Segment & \multicolumn{4}{c}{ID performance} \\
        \cmidrule(lr){2-3} \cmidrule(lr){4-5} \cmidrule(lr){7-10}
        Variant & Correct & Incorrect & Correct & Incorrect & balanced
        & Pass@1 $\uparrow$ & AUROC $\uparrow$ & Brier $\downarrow$ & ECE $\downarrow$ \\
        \midrule
        Confidence-only & -- & -- & $\checkmark$ & $\checkmark$ & --
        & 43.56 & 74.62 & 20.97 & 17.43 \\
        Full-CE & $\checkmark$ & $\checkmark$ & $\checkmark$ & $\checkmark$ & --
        & 58.49 & 61.66 & 29.29 & 27.65 \\
        Full-Balanced & $\checkmark$ & $\checkmark$ & $\checkmark$ & $\checkmark$ & $\checkmark$
        & 58.46 & 76.67 & 19.04 & 16.87 \\
        Correct-only & $\checkmark$ & -- & $\checkmark$ & -- & $\checkmark$
        & \textbf{59.89} & 77.58 & 18.95 & 16.98 \\
        \rowcolor{calibrow} CalibSFT & $\checkmark$ & -- & $\checkmark$ & $\checkmark$ & $\checkmark$
        & 58.51 & \textbf{79.93} & \textbf{17.30} & \textbf{14.23} \\
        \bottomrule
    \end{tabular}%
    }
\end{table}

%% file: tables/confidence_estimators.tex
\begin{table}[t]
    \centering
    \caption{Confidence estimators for RLCR with and without CalibSFT initialization,
    averaged over the 8 in-distribution and 8 out-of-distribution benchmarks.
    Self-consistency aggregates the sampled responses by majority vote, so its Pass@1 column reports majority-vote accuracy.
    CalibSFT initialization improves calibration and discrimination under all three estimators.}
    \label{tab:confidence_estimators}
    \resizebox{\linewidth}{!}{%
    \begin{tabular}{llcccccccc}
        \toprule
        & & \multicolumn{4}{c}{In-distribution} & \multicolumn{4}{c}{Out-of-distribution} \\
        \cmidrule(lr){3-6} \cmidrule(lr){7-10}
        Estimator & Method & Pass@1 $\uparrow$ & AUROC $\uparrow$ & Brier $\downarrow$ & ECE $\downarrow$ & Pass@1 $\uparrow$ & AUROC $\uparrow$ & Brier $\downarrow$ & ECE $\downarrow$ \\
        \midrule
         & RLCR & 65.33 & 82.36 & 14.43 & 14.46 & 43.96 & 69.38 & 31.14 & 31.17 \\
        \rowcolor{calibrow} \cellcolor{white}\multirow{-2}{*}{Verbalized} & + CalibSFT & \textbf{67.20} & \textbf{86.32} & \textbf{12.77} & \textbf{8.59} & \textbf{45.71} & \textbf{71.31} & \textbf{23.24} & \textbf{18.20} \\
        \midrule
         & RLCR & 65.33 & 83.60 & 14.35 & 15.03 & 43.96 & 69.70 & 31.22 & 30.54 \\
        \rowcolor{calibrow} \cellcolor{white}\multirow{-2}{*}{Token probability} & + CalibSFT & \textbf{67.20} & \textbf{87.66} & \textbf{12.75} & \textbf{10.13} & \textbf{45.71} & \textbf{71.51} & \textbf{22.80} & \textbf{19.29} \\
        \midrule
         & RLCR & 70.86 & 85.36 & 13.09 & 17.62 & 44.83 & 70.83 & 30.61 & 30.53 \\
        \rowcolor{calibrow} \cellcolor{white}\multirow{-2}{*}{Self-consistency} & + CalibSFT & \textbf{72.31} & \textbf{87.64} & \textbf{11.35} & \textbf{9.67} & \textbf{47.24} & \textbf{72.51} & \textbf{22.77} & \textbf{18.19} \\
        \bottomrule
    \end{tabular}}
\end{table}

%% file: tables/training_cost.tex
\begin{table}[t]
    \centering
    \caption{Training cost comparison between CalibSFT and confidence-aware RL
    under a 200-step training budget on eight NVIDIA B20Z GPUs.
    CalibSFT takes substantially less training time and consumes only 6.9\% of the GPU-hours of RLCR.}
    \label{tab:training_cost}
    \footnotesize
    \begin{tabular}{lccc}
        \toprule
        Stage & Minutes per step & Wall-clock (h) & GPU-hours \\
        \midrule
        CalibSFT & 0.33 & 1.1 & 8.8 \\
        \midrule
        RLCR & 4.77 & 15.9 & 127.3 \\
        RLCR + CalibSFT & 4.58 & 15.3 & 122.2 \\
        \bottomrule
    \end{tabular}
\end{table}

%% file: tables/risk_control.tex
\begin{table}[t]
    \centering
    \caption{Certified coverage (\%) at fixed risk budgets for confidence-aware RL with and without CalibSFT initialization.
    Values report the mean and standard deviation on held-out questions over ten random splits ($\delta=0.05$).
    Perfect ranking serves as an oracle reference that ranks all correct responses above incorrect ones.
    CalibSFT initialization consistently increases certified coverage across all methods and risk budgets.}
    \label{tab:risk_control}
    \footnotesize
    \setlength{\tabcolsep}{5pt}
    \begin{tabular}{lccc}
        \toprule
        & \multicolumn{2}{c}{In-distribution} & Out-of-distribution \\
        \cmidrule(lr){2-3} \cmidrule(lr){4-4}
        Method & $r_{\max}=10\%$ & $r_{\max}=20\%$ & $r_{\max}=20\%$ \\
        \midrule
        RLCR & 31.4$_{\pm 0.6}$ & 86.4$_{\pm 1.1}$ & 0.0$_{\pm 0.0}$ \\
        \rowcolor{calibrow} + CalibSFT & \textbf{62.6$_{\pm 2.5}$} & \textbf{88.6$_{\pm 1.3}$} & \textbf{20.3$_{\pm 0.8}$} \\
        \midrule
        ReDoubt & 28.8$_{\pm 0.5}$ & 70.2$_{\pm 12.3}$ & 0.0$_{\pm 0.0}$ \\
        \rowcolor{calibrow} + CalibSFT & \textbf{66.1$_{\pm 0.9}$} & \textbf{88.6$_{\pm 1.2}$} & \textbf{18.0$_{\pm 2.5}$} \\
        \midrule
        CoCA & 0.0$_{\pm 0.0}$ & 82.3$_{\pm 1.4}$ & 0.0$_{\pm 0.0}$ \\
        \rowcolor{calibrow} + CalibSFT & \textbf{52.3$_{\pm 10.5}$} & \textbf{87.2$_{\pm 2.5}$} & \textbf{20.5$_{\pm 0.2}$} \\
        \midrule
        DCPO & 1.6$_{\pm 4.7}$ & 85.5$_{\pm 2.2}$ & 0.0$_{\pm 0.0}$ \\
        \rowcolor{calibrow} + CalibSFT & \textbf{33.3$_{\pm 21.4}$} & \textbf{87.3$_{\pm 1.4}$} & \textbf{22.8$_{\pm 1.5}$} \\
        \midrule
        C3RL & 0.0$_{\pm 0.0}$ & 17.7$_{\pm 35.3}$ & 0.0$_{\pm 0.0}$ \\
        \rowcolor{calibrow} + CalibSFT & \textbf{49.2$_{\pm 16.4}$} & \textbf{84.8$_{\pm 0.5}$} & \textbf{16.8$_{\pm 0.2}$} \\
        \midrule
        Perfect ranking & 82.0 & 92.3 & 54.5 \\
        \bottomrule
    \end{tabular}
\end{table}

%% file: tables/cascade.tex
\begin{table*}[t]
    \centering
    \caption{Confidence-guided model routing to Qwen3.7-Flash across two accuracy-loss tolerances $\varepsilon$, where local policies are trained on Qwen3-8B. Values are averaged over ten random splits. CalibSFT initialization substantially reduces deferral rates and increases token savings while maintaining task accuracy close to the full-deferral baseline ($82.58\%$).}
    \label{tab:cascade}
    \footnotesize
    \setlength{\tabcolsep}{5pt}
    \resizebox{\textwidth}{!}{%
    \begin{tabular}{lcccccc}
        \toprule
        & \multicolumn{3}{c}{Tolerance $\varepsilon=1\%$} & \multicolumn{3}{c}{Tolerance $\varepsilon=2\%$} \\
        \cmidrule(lr){2-4} \cmidrule(lr){5-7}
        Method & Deferred (\%) $\downarrow$ & Tokens saved (\%) $\uparrow$ & Accuracy & Deferred (\%) $\downarrow$ & Tokens saved (\%) $\uparrow$ & Accuracy \\
        \midrule
        RLCR & 65.0$_{\pm 1.1}$ & 10.1 & 82.97 & 45.9$_{\pm 1.0}$ & 23.3 & 83.33 \\
        \rowcolor{calibrow} + CalibSFT & \textbf{60.3$_{\pm 3.6}$} & \textbf{17.9} & 82.66 & \textbf{39.2$_{\pm 1.0}$} & \textbf{36.1} & 82.47 \\
        \midrule
        ReDoubt & 67.4$_{\pm 0.8}$ & 10.1 & 82.99 & 43.1$_{\pm 1.0}$ & 27.1 & 83.24 \\
        \rowcolor{calibrow} + CalibSFT & \textbf{55.3$_{\pm 2.7}$} & \textbf{23.8} & 82.81 & \textbf{36.3$_{\pm 3.7}$} & \textbf{42.1} & 82.60 \\
        \midrule
        CoCA & 98.7$_{\pm 0.1}$ & 0.6 & 82.61 & 98.7$_{\pm 0.1}$ & 0.6 & 82.61 \\
        \rowcolor{calibrow} + CalibSFT & \textbf{71.7$_{\pm 4.9}$} & \textbf{17.3} & 82.51 & \textbf{51.2$_{\pm 5.5}$} & \textbf{33.1} & 82.42 \\
        \midrule
        DCPO & 82.9$_{\pm 0.7}$ & 5.6 & 82.64 & 82.9$_{\pm 0.7}$ & 5.6 & 82.64 \\
        \rowcolor{calibrow} + CalibSFT & \textbf{77.4$_{\pm 0.3}$} & \textbf{12.2} & 82.58 & \textbf{33.3$_{\pm 3.8}$} & \textbf{43.5} & 82.80 \\
        \bottomrule
    \end{tabular}
    }
\end{table*}